\documentclass{article}






\usepackage{enumitem}

\everypar=\expandafter{\the\everypar\loosness=-1 }

\usepackage{macros}
\usepackage[utf8]{inputenc} 
\usepackage[T1]{fontenc}    
\usepackage{hyperref}       
\usepackage{url}            
\usepackage{booktabs}       
\usepackage{amsfonts}       
\usepackage{nicefrac}       
\usepackage{microtype}      
\usepackage{amsmath}
\usepackage{amssymb}
\usepackage{subcaption}
\usepackage{caption}

\usepackage{graphicx}
\usepackage{algorithm}
\usepackage{algpseudocode}

\usepackage{todonotes}
\PassOptionsToPackage{numbers}{natbib}
\usepackage[preprint]{neurips_2020}

\makeatletter
\newcommand*{\rom}[1]{\expandafter\@slowromancap\romannumeral #1@}

\newcommand{\corgii}{\textrm{Corgi}^2}
\newcommand{\expect}[2]{\mathbb{E}_{#1}{\left[#2\right]}}
\newcommand{\var}[2]{\text{V}_{#1}\left(#2\right)}
\newcommand{\cov}[2]{\text{COV}_{#1}\left(#2\right)}
\newcommand{\Prob}[1]{\text{Pr}\left(#1\right)}

\newcommand{\innerproduct}[2]{\langle #1, #2 \rangle}

\title{Corgi$^2$: A Hybrid Offline-Online Approach To Storage-Aware Data Shuffling For SGD}

%


\author{
    Etay Livne\thanks{Corresponding Author: etay.livne@gmail.com} \\ Mobileye \And
    Gal Kaplun \\ Mobileye \& Harvard University \And
    Eran Malach \quad\quad 
    Shai Shalev-Schwatz\\
    Mobileye \& Hebrew University
}
\begin{document}
\maketitle
\begin{abstract}

When using Stochastic Gradient Descent (SGD) for training machine learning models, it is often crucial to provide the model with examples sampled at random from the dataset. However, for large datasets stored in the cloud, random access to individual examples is often costly and inefficient.
A recent work \cite{corgi}, proposed an online shuffling algorithm called CorgiPile, which greatly improves efficiency of data access, at the cost some performance loss, which is particularly apparent for large datasets stored in homogeneous shards (e.g., video datasets).
In this paper, we introduce a novel two-step partial data shuffling strategy for SGD which combines an offline iteration of the CorgiPile method with a subsequent online iteration.
Our approach enjoys the best of both worlds: 
it performs similarly to SGD with random access (even for homogenous data) without compromising the data access efficiency of CorgiPile. We provide a comprehensive theoretical analysis of the convergence properties of our method and demonstrate its practical advantages through experimental results.
\end{abstract}

\section{Introduction}
Modern machine learning pipelines, used for training large neural network models, necessitate the employment of extensive datasets, which are frequently stored on cloud-based systems due to their size surpassing the capacity of fast memory. Stochastic gradient descent (SGD) and its variants have emerged as the primary optimization tools for these models. However, SGD relies on independent and identically distributed (i.i.d.) access to the dataset, which is advantageous when random access memory is available. In contrast, when utilizing slower storage systems, particularly cloud-based ones, random access becomes costly, and it is preferable to read and write data sequentially \cite{aizman2019high}.

This challenge is further compounded by the customary storage of data in shards \cite{bagui2015database}, which are horizontal (row-wise) partitions of the data. Each partition is maintained on a separate server or storage system to efficiently distribute the load. For example, image data is often acquired in the form of videos, leading to the storage of single or multiple clips within each shard. This results in highly homogeneous or non-diverse chunks of data. Consequently, running SGD with sequential reading of examples, without randomized access, will result in a suboptimal solution \cite{corgi, whyglobally}.

One could shuffle the dataset fully before performing SGD, but this also requires random access to memory. A recent line of work \cite{corgi, tensorflow2022slidingwindow, feng2012towards, whyglobally} suggests an attractive alternative: performing a partial online shuffle during training time. In particular, \cite{corgi} recently suggested the CorgiPile shuffling algorithm, a technique that reads multiple shards into a large memory buffer, shuffles the buffer, and uses the partially shuffled examples for training. This approach gains data access efficiency, albeit at the expense of performance loss that is especially noticeable for large datasets stored in homogeneous shards. The goal of this work is to find a better trade-off between data access efficiency and optimization performance.
\begin{figure}[ht]
    \centering
    \includegraphics[width=\textwidth,height=\textheight,keepaspectratio]{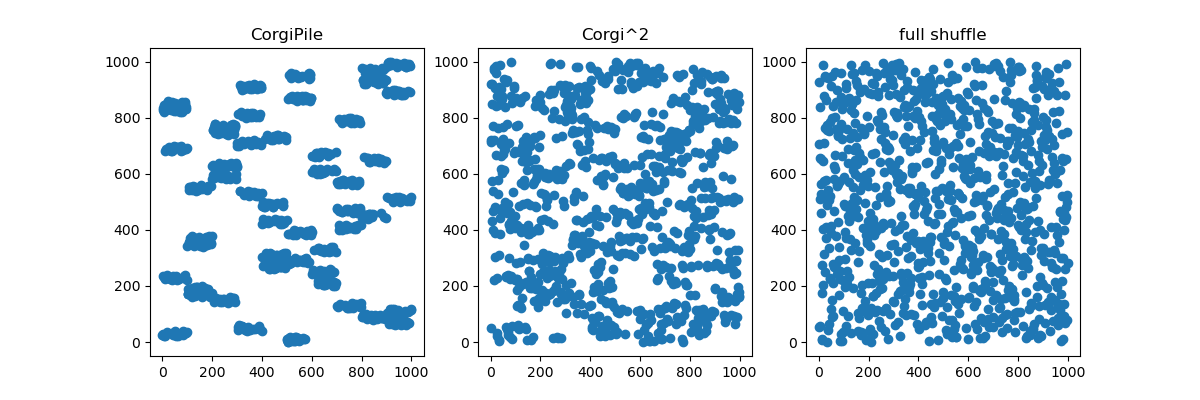}
    \caption{Comparing the uniformity of shuffling the set $\set{1,...,1000}$ using CorgiPile, $\corgii$ and full shuffle. It is evident that $\corgii$ is close to full shuffle of the data.}
    \label{fig:fig1}
\end{figure}

In this paper, we present $\corgii$, a novel approach that enjoys the strengths of both offline and online data shuffling techniques. We propose adding another offline step that incurs a small overhead compared to Corgi (much cheaper than a full offline shuffle). Thus, our method entails a two-step partial data shuffling strategy, wherein an offline iteration of the CorgiPile method is succeeded by a subsequent online iteration (see schematic representation in Figure~\ref{fig:fig2}). This approach enables us to achieve performance comparable to SGD with random access, even for homogeneous data, without compromising the data access efficiency of CorgiPile. In Figure~\ref{fig:fig1} we illustrate the distribution of data indices after CorgiPile, $\corgii$ and Full Shuffle. It is evident that $\corgii$ spreads the data samples more uniformly across the shuffled buffer.

We perform a comprehensive theoretical analysis of the convergence properties of our method, demonstrating its compatibility with SGD optimization. We further underscore the practical advantages of our approach through a series of experimental results, highlighting its potential to improve the way we train machine learning models in storage-aware systems.

\textbf{Our Contributions.} We delineate our contributions as follows:

\begin{enumerate}
\item In Section~\ref{sec:setting} we introduce $\textrm{Corgi}^2$, a novel two-step partial data shuffling strategy that combines the strengths of both offline and online data shuffling techniques.
\item We provide a comprehensive theoretical analysis of the convergence properties of our method in Section~\ref{sec:theory}, demonstrating its robustness and efficacy. Our analysis elucidates the conditions under which $\corgii$ converges and offers insights into the trade-offs between data access efficiency and optimization performance. 
\item We conduct a series of experiments to empirically validate the effectiveness of $\corgii$ in Section~\ref{sec:exps1}. Our results underscore the practical advantages of $\corgii$ in various settings.
\end{enumerate}

To summarize, our work advances the understanding of data shuffling strategies in cloud-based storage systems, laying the foundation for future research on optimizing machine learning models in such environments.

\textbf{Related Work}. 

It is well established that SGD works well for large datasets given i.i.d  sample access to the data \cite{bottou2010large, gpt3, mae}. Since such access is costly in practice, it is often simulated by fully shuffling the dataset "offline" (before training), and reading the data sequentially "online" (during training). While this requires a lengthy and expensive offline phase, prior work demonstrated that the resulting convergence rate of training is comparable to that of random access SGD \cite{shamir2016without,safran2020good,gurbuzbalaban2019convergence,gurbuzbalaban2021random}.

Recently, many approaches for "partial" shuffling have emerged, attempting to compromise on the uniformity of randomness in exchange for reduced sampling costs. Prominently, Tensorflow implements a sliding-window shuffling scheme that reads the data sequentially into a shuffled buffer. Other approaches include Multiplexed Reservoir Sampling (MRS)  \cite{feng2012towards}, where two concurrent threads are used to perform reservoir sampling and update a shared model and \cite{whyglobally}, 
 where multiple workers read chunks of data in parallel and communicate shuffled data between each other. Finally, \cite{corgi} recently introduced the highly effective CorgiPile algorithm, which compares favorably to all the other approaches we're aware of, and which we seek to directly improve upon by prepending an \textit{efficient, partial} offline shuffle step to it.\cite{tensorflow2022slidingwindow}.

While we focus our attention on the impact of partial randomization on the outcome of machine learning training processes, \cite{deckShuffle} analyzes some further probabilistic attributes of a very similar shuffle method to the one presented here.

\section{Setting and Algorithm}
\label{sec:setting}

\begin{figure}[t]
    \centering
    \includegraphics[width=\textwidth,height=\textheight,keepaspectratio]{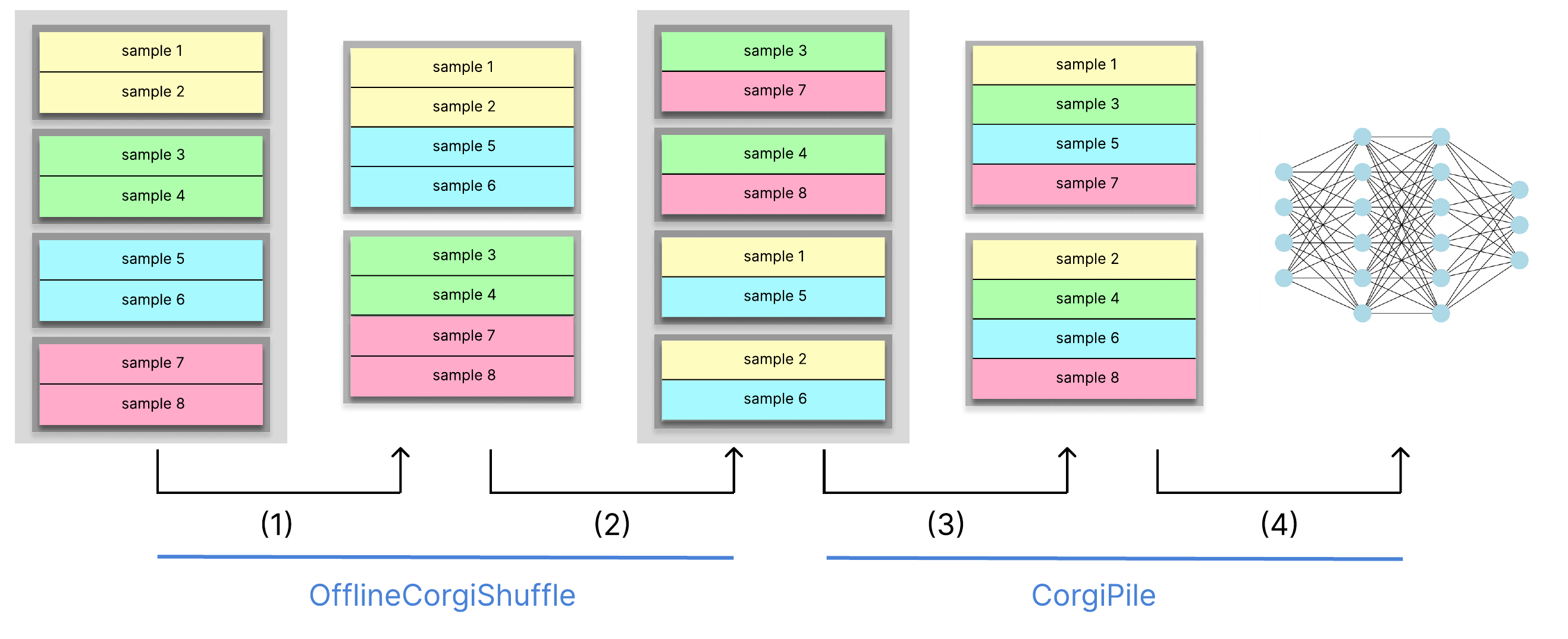}
    \caption{Schematic visualization of the end-to-end flow of $\corgii$. Samples of the same color belong to the same storage block before the process starts. (1) Sets of blocks are randomly selected from the datasets and stored in a local buffer. (2) The buffer is randomly shuffled and written in new blocks. (3) During training, sets of the new blocks are loaded at random into a buffer. (4) Each buffer is shuffled and processed with SGD.}
    \label{fig:fig2}
\end{figure}

In this section, we present our notation and framework, which encapsulates the formalization of diverse machine learning tasks. Consider the following standard optimization problem, where our goal is to minimize an average of functions $\{f_1, \dots, f_m\}$:

\begin{equation}
\min_\mathbf{x} F(\mathbf{x}) = \frac{1}{m} \sum_{i=1}^m f_i(\mathbf{x})
\end{equation}

For instance, consider a dataset of $m$ examples, and let $f_i$ represent the loss over the $i$-th example. Then, the objective function $F(\mathbf{x})$ is the average loss over the individual examples across the entire dataset. Our task is to find the optimal value of $\mathbf{x}$ (e.g., the parameters of some machine learning model) that minimizes this objective function.

Many of the prevalent modern approaches to machine learning involve optimizing this objective function using the Stochastic Gradient Descent (SGD) algorithm. An execution of SGD initializes the parameter vector $\mathbf{x}_0$ and then performs $\mathcal{T}$ \textit{epochs}, each consisting of multiple iterations of the following procedure:

\begin{enumerate}[topsep=0pt, partopsep=0pt,leftmargin=10.5mm]
    \item Sample an example $f_i$ from the dataset, where $i$ is selected uniformly at random.
    \item Compute the gradient $\nabla f_i(\mathbf{x}_{j-1})$ and update $\mathbf{x}_j = \mathbf{x}_{j-1} - \eta_j \nabla f_i(\mathbf{x}_{j-1})$.
\end{enumerate}

The protocol is terminated upon reaching a predetermined number of epochs.

SGD guarantees fast convergence under different assumptions (e.g., when the $f_i$-s are convex functions), but requires random access to individual examples, which is inefficient when training on large datasets stored in the cloud. The CorgiPile algorithm (see Algorithm~\ref{alg:corgi}) was proposed in \cite{corgi} as an alternative to SGD with random access, improving efficiency by accessing blocks of examples together. This algorithm assumes that the data is horizontally (i.e., row-wise) sharded across $N$ blocks of size $b$, resulting in a dataset size of $m=Nb$. CorgiPile operates by iteratively picking $n$ blocks randomly from the dataset to fill a buffer $S$, shuffling the buffer, and running SGD on the examples in the buffer.

\begin{algorithm}[ht]
\caption{CorgiPile}\label{alg:corgi}
\begin{algorithmic}[1]
\State \textbf{input:} Blocks $\{B_i\}_{i=1}^N$, each of size $b$, with a total of $Nb$ examples;\newline Model parameterized by $\mathbf{x}$; number of epochs $\mathcal{T} \geq 1$; buffer size $n \geq 1$.
\For{$t=1,\dots,\mathcal{T}$}:
    \State Randomly pick $n$ blocks (i.i.d without replacement).
    \State Shuffle the indices of the resulting buffer, obtaining permutation $\Psi_t$ over $\{1,\dots,nb\}$.
    \State $\x_0^{(t)} \leftarrow \x_{nb}^{(t-1)}$
    \For{$j = 1,\dots, nb$}:
        \State $\mathbf{x}_j^{(t)} = \mathbf{x}_{j-1}^{(t)} - \eta_t\nabla f_{\Psi_t(j)}\left( \mathbf{x}_{j-1}^{(t)}\right)$.
    \EndFor
\EndFor
\State \textbf{return} $\mathbf{x}_{nb}^{(\mathcal{T})}$
\end{algorithmic}
\end{algorithm}


In alignment with \cite{corgi}, we presume a read-write buffer $S$ with random access, capable of containing up to $nb$ examples simultaneously, namely $|S| = nb$. We introduce a new algorithm, $\text{Corgi}^2$ (see Algorithm \ref{alg:corgi2}), which improves the convergence guarantees of CorgiPile while maintaining efficient data access, at the cost of adding an efficient offline step that reorganizes the data before the training starts.
More specifically, the $\text{Corgi}^2$ algorithm initially executes an offline, slightly modified version of CorgiPile, repurposed to redistribute examples among blocks in a manner that minimizes block variance (see Algorithm \ref{alg:offline-corgi}). 
Subsequently, it runs the original online CorgiPile algorithm (Algorithm~\ref{alg:corgi}) on the preprocessed dataset.

\begin{algorithm}
\caption{OfflineCorgiShuffle}\label{alg:offline-corgi}
\begin{algorithmic}[1]

\State \textbf{input:} Blocks $\{B_i\}_{i=1}^N$, each of size $b$, with a total of $Nb$ examples; buffer size $n \ge 1$.
\For{$l=1,...,N/n$}:
    \State Randomly pick $n$ blocks (i.i.d with replacement), and fill the buffer $S$.
    \For{$j=1,...,n$}:
        \State Randomly pick $b$ examples from the $nb$ examples in $S$ (i.i.d with replacement).
        \State Create a new block $\widetilde{B}_{l, j}$ with the chosen tuples.
    \EndFor
\EndFor
\State \textbf{return} $\{\widetilde{B}_{l,j}\}$ for $j\in \{1,...,n\}$ and $l\in \{1,...,\frac{N}{n}\}$.
\end{algorithmic}
\end{algorithm}

\begin{algorithm}
\caption{The $\corgii$ Algorithm}\label{alg:corgi2}
\begin{algorithmic}[1]

\State \textbf{Input:} blocks $\{B_i\}_{i=1}^N$, each of size $b$, with a total of $Nb$ examples; a model parameterized by $\mathbf{x}$; the number of epochs $\mathcal{T} \geq 1$; and a buffer size $n \geq 1$.

\State \quad Execute the OfflineCorgiShuffle procedure to obtain the shuffled blocks $\{\widetilde{B}_{l,j}\}$.
\State \quad Apply the CorgiPile method to the shuffled data $\{\widetilde{B}_{l,j}\}$.

\State \textbf{Output:} The updated model parameters $\x_{nm}^{(\mathcal{T})}$ after $\mathcal{T}$ epochs.

\end{algorithmic}
\end{algorithm}

We note that the additional cost in terms of time and number of data access queries due to using $\corgii$ is minimal, compared to the original CorgiPile algorithm (see Section \ref{sec:complexity}). However, a naive implementation of $\corgii$ doubles the cost of storage, which may be significant for large datasets. That said, we observe that with a small modification to the offline step of $\corgii$, namely by selecting blocks i.i.d. \emph{without} replacement, it is possible derive a variant of $\corgii$ that reorganizes the data in-place, and thus consumes no extra storage. While this variant is harder to analyze theoretically, it obtains similar (and often better) performance in practice.

\section{Theory}
\label{sec:theory}
In this section, we analyze the convergence time of  $\corgii$ under some assumptions, and show that the cost of the additional offline step (in terms of data access) is relatively small. 

\subsection{Notations and Assumptions}
We make the following assumptions about the loss functions and the examples in the dataset. These assumptions are taken from the analysis in \cite{corgi}. 

\begin{enumerate}
\label{lst:assumptions}
    \item $f_i(\cdot)$ are twice continuously differentiable.
    
    \item $L$-Lipschitz gradient: $\exists L \in \mathbb{R}_+$, $\|\nabla f_i (\x) - \nabla f_i (\y)\| \leq L \|\x-\y\|$ for all $i \in [m]$, $\forall\x,\mathbf{y}$.
    
    \item $L_H$-Lipschitz Hessian matrix: $\|H_i(\x) - H_i(\y)\| \leq L_H\|\x-\y\|$ for all $i\in [m]$, $\forall\x,\mathbf{y}$.
    
   \item Bounded gradient: $\exists G \in \mathbb{R}_+$, $\|\nabla f_i (\x)  \| \leq G$ for all $i \in [m]$, $\forall\x$. 
   
    \item Bounded Variance: $\frac{1}{m}\sum_{i=1}^m\|\nabla f_i(\x)-\nabla F(\x)\|^2\leq \sigma^2$, $\forall\x$.
    
\end{enumerate}

A crucial aspect in examining the original CorgiPile algorithm \cite{corgi} is the constraint on the block-wise gradient variance. Intuitively, this variance is minimized when the gradient of each block $\nabla f_{B_l}$ closely approximates the gradient over the entire data $\nabla F$. More formally,

\begin{equation}
\label{eq:blockwise_var}
\frac{1}{N}\sum_{l=1}^N \left\|\nabla f_{B_l}(\x) - \nabla F(\x) \right\|^2 \leq h_D  \frac{\sigma^2}{b},
\end{equation}

where $\nabla f_{B_l}:=\frac{1}{b}\sum_{i\in B_l}\nabla f_i$ is the mean gradient in the block, and  $h_D$ represents a constant that characterizes the variability of this block-wise gradient.

It is important to note that the inequality in Equation \ref{eq:blockwise_var} always holds for $h_D = b$, and the value of $h_D$ will be larger when each block exhibits greater homogeneity. For instance, consider an image dataset in which each block comprises sequential frames from a single video. In this case, the dataset would exhibit a high degree of homogeneity. Conversely, a well-shuffled dataset, wherein blocks possess highly similar distributions, could yield $h_D$ values approaching $1$. There are even edge cases, such as a perfectly balanced dataset with identical gradients across all blocks, that would result in $h_D=0$.

\subsection{Main Result}
We start by showing that after the offline step of $\corgii$ (i.e., after running OfflineCorgiShuffle), the block-wise variance decreases compared to the variance of the original blocks.
\begin{theorem}
    \label{thm:variance_bound}
    Consider the execution of \emph{OfflineCorgiShuffle}  (refer to Algorithm \ref{alg:offline-corgi}) on a dataset characterized by a variance bound $\sigma^2$, a block-wise gradient variance parameter $h_D$, $N$ blocks each containing $b$ examples, and a buffer size $nb$. For all $\x$, the following inequality holds:
    \begin{equation} \label{eq:thm2}
        \mathbb{E}\left[\frac{1}{N}\sum_{l=1}^N \left\|\nabla f_{\widetilde{B}_l}(\x) - \nabla F(\x) \right\|^2\right] \leq h'_D\frac{\sigma^2}{b},
    \end{equation}
    where $h'_D = 1 + \left(\frac{1}{n} - \frac{1}{nb}\right)h_D$, and $f_{\widetilde{B}_l}(\x)$ denotes the average of examples in ${\widetilde{B}_l}$, the $l$-th block generated by the \emph{OfflineCorgiShuffle} algorithm.
\end{theorem}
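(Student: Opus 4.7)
The plan is to fix $\x$, center the per-example gradients by $g_i := \nabla f_i(\x) - \nabla F(\x)$, and reduce the claim to a per-block expectation. By the symmetric construction in OfflineCorgiShuffle every new block $\widetilde{B}_{l,j}$ has the same distribution, so it suffices to prove
$$\mathbb{E}\bigl[\|\nabla f_{\widetilde{B}_{l,j}}(\x) - \nabla F(\x)\|^2\bigr] \le h'_D\frac{\sigma^2}{b}$$
for a single new block, then average. This reduces the proof to analyzing a two-stage i.i.d.\ sampling procedure: first pick $n$ blocks with replacement to form the buffer $S$ (of size $nb$), then pick $b$ elements with replacement from $S$ to form the new block.

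Next I would apply the standard variance-of-an-i.i.d.-mean decomposition conditional on $S$. Writing $\mu_S = \frac{1}{nb}\sum_{i\in S} g_i$ and $\mathrm{Var}_S = \frac{1}{nb}\sum_{i\in S}\|g_i-\mu_S\|^2$, a direct second-moment expansion (cross terms between distinct samples give $\|\mu_S\|^2$, diagonal terms give $\|g_i\|^2$) yields
$$\mathbb{E}\bigl[\|\nabla f_{\widetilde{B}_{l,j}}(\x) - \nabla F(\x)\|^2 \mid S\bigr] = \|\mu_S\|^2 + \frac{\mathrm{Var}_S}{b}.$$
Then I would take expectation over $S$ and treat the two terms separately.

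For $\mathbb{E}\|\mu_S\|^2$: since the $n$ blocks in $S$ are i.i.d.\ uniform over $\{B_1,\dots,B_N\}$ and the $g_i$'s sum to zero (so the mean of the block-means is zero), $\mu_S$ is an average of $n$ i.i.d.\ zero-mean vectors whose second moment is exactly $\frac{1}{N}\sum_l \|\nabla f_{B_l}(\x)-\nabla F(\x)\|^2 \le h_D\sigma^2/b$ by the block-wise variance assumption~(\ref{eq:blockwise_var}). This gives $\mathbb{E}\|\mu_S\|^2 \le h_D\sigma^2/(nb)$. For $\mathbb{E}[\mathrm{Var}_S]$, I would use $\mathrm{Var}_S = \frac{1}{nb}\sum_{i\in S}\|g_i\|^2 - \|\mu_S\|^2$, observe that the unconditional expectation of $\frac{1}{nb}\sum_{i\in S}\|g_i\|^2$ equals $\frac{1}{m}\sum_{i=1}^m \|g_i\|^2 \le \sigma^2$ by the bounded-variance assumption, and subtract. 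Plugging both estimates back in gives
$$\mathbb{E}\bigl[\|\nabla f_{\widetilde{B}_{l,j}}(\x) - \nabla F(\x)\|^2\bigr] \le \frac{1}{b}\Bigl(\sigma^2 - \mathbb{E}\|\mu_S\|^2\Bigr) + \mathbb{E}\|\mu_S\|^2 = \frac{\sigma^2}{b} + \Bigl(1-\tfrac{1}{b}\Bigr)\mathbb{E}\|\mu_S\|^2,$$
which after bounding $\mathbb{E}\|\mu_S\|^2$ yields exactly $\frac{\sigma^2}{b}\bigl(1+(\tfrac{1}{n}-\tfrac{1}{nb})h_D\bigr)$.

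I do not anticipate a serious obstacle: the calculation is a clean iterated application of the tower property for two independent i.i.d.\ sampling stages. The only subtlety is keeping track of the fact that the $(1-1/b)$ factor comes from the i.i.d.\ within-buffer resampling (so increasing $b$ does not help erase $\|\mu_S\|^2$), while the $1/n$ factor comes from the averaging over the outer block selection — together they give the tight $h'_D$. A minor bookkeeping point is that the theorem sums over $N$ new blocks indexed by a single label $l$, whereas the algorithm produces them as $\widetilde{B}_{l,j}$ for $l\in\{1,\dots,N/n\}$ and $j\in\{1,\dots,n\}$; by symmetry every index gives the same expected value, so the per-block bound transfers immediately to the average.
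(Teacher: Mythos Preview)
Your proposal is correct and follows essentially the same approach as the paper: both reduce to a single new block by symmetry, condition on the buffer $S$, and decompose into a term $\mathbb{E}\|\mu_S\|^2$ bounded via the block-wise variance assumption (giving the $h_D/n$ contribution) plus a within-buffer variance term handled by $\mathrm{Var}_S = \tfrac{1}{nb}\sum_{i\in S}\|g_i\|^2 - \|\mu_S\|^2$ together with $\mathbb{E}[\tfrac{1}{nb}\sum_{i\in S}\|g_i\|^2]\le \sigma^2$. The paper phrases this as two nested applications of the law of total variance (and derives $\mathrm{Var}(\nabla f_{\widetilde B}\mid S)=\tfrac{1}{b}\mathrm{Var}_i(S_i\mid S)$ via a multinomial/Bienaym\'e computation), whereas you obtain the same identities directly from the i.i.d.\ mean formula on centered gradients, but the decomposition and the resulting bound are identical.
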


The ratio between $h'_D$ and $h_D$ can thus be expressed as,
\begin{equation}
    \label{eq:hd_change}
    \frac{h'_D}{h_D} = \frac{1}{h_D} + \frac{b-1}{nb}.
\end{equation}
This ratio is less than 1 whenever the following condition is satisfied:
\begin{equation}
    h_D > \frac{nb}{(n-1)b - 1}.
\end{equation}
This condition is trivially met when $n$ and $b$ assume realistic values (e.g., it holds if $n > 2$ and $b > 2$). It is worth noting that larger values of $h_D$ amplify the ratio, aligning with the intuitive understanding that datasets initially divided into more homogeneous blocks would undergo a more substantial reduction in variance during the offline shuffle phase.

Blockwise variance significantly impacts the disparity between the anticipated distribution of a buffer comprising $n$ blocks and the overall dataset distribution. In turn, this lowers the convergence rate, bringing it closer to that of random access SGD during training. Further elaboration on this relationship is provided in Theorem \ref{thm:convergence}.

\textbf{Theorem \ref{thm:variance_bound} Proof Sketch:} Since Offline Corgi works on each generated block $\Tilde{B}_{l}$ independently, we analyze a single iteration of the algorithm. We focus on the expression, 
\[
\var{S,\widetilde{B}_l}{\nabla f_{\widetilde{B}_l}}:=\frac{1}{N}\sum_{l=1}^N \left\|\nabla f_{\widetilde{B}_l}(\x) - \nabla F(\x) \right\|^2.
\]
where $S$ is a vector representation of the buffer, $\widetilde{B_l}$ represents the block created from it by uniformly sampling from the buffer and $l$ is a uniformly sampled index.
This is a measure of variance that generalizes scalar variance, expressed as a scalar rather than a matrix. This measure, has similar properties to standard variance such as $\var{}{\alpha X} = \alpha^2\var{}{X}$ and law of total variance (see Appendix~\ref{sec:variance}  for a full discussion). Thus we can decompose the left hand side of the theorem equation using the law of total variance:

\[
\frac{1}{N}\sum_{l=1}^N \left\|\nabla f_{\widetilde{B}_l}(\x) - \nabla F(\x) \right\|^2 = \var{S,\widetilde{B}_l}{\nabla f_{\widetilde{B}_l}(\x)} = 
\underbrace{\var{S}{\expect{\widetilde{B}_l}{\nabla f_{\widetilde{B}_l}|S}}}_{(i)} + 
\underbrace{\expect{S}{\var{\widetilde{B}_l}{\nabla f_{\widetilde{B}_l}|S}}}_{(ii)}  
\]

 When $S$ is fixed, each $\widetilde{B}_l$ is an unbiased i.i.d selection of $b$ examples from it.

Thus, in $(i)$, given fixed $S$ we have $\expect{\widetilde{B}_l}{\nabla f_{\widetilde{B}_l}|S} = \nabla f_S := \frac{1}{nb}\sum_{i\in S}\nabla f_i $, i.e., the average gradient in the buffer. In turn, since $S$ is an i.i.d sampling of $n$ blocks, the variance of its average is equal to $\frac{1}{n}$ of the variance of sampling the average of a single block, which gives us
\[
    (i):  \var{S}{\expect{\widetilde{B}_l}{\nabla f_{\widetilde{B}_l}|S}} = \var{S}{\nabla f_S|S} = \frac{1}{n}\var{i}{\nabla f_{B_i}} \leq \frac{1}{n}h_D\frac{\sigma^2}{b}.
\]
Where $B_i$ is the $i^{\text{th}}$ block, before applying the Offline Corgi.

For term $(ii)$, we apply Bienaymé's identity and use the fact that averaging $b$ i.i.d. elements decreases the variance by a factor of $\frac{1}{b}$ compared to the variance of sampling a single element.
Given that, and letting $i$ be an index selected uniformly from $1,...,bn$, we observe that $\expect{S}{\var{\widetilde{B}_l}{\nabla f_{\widetilde{B}_l}|S}} = \frac{1}{b}\expect{S}{\var{i}{S_i|S}}$. This expression can be decomposed to,
\[
\expect{S}{\var{\widetilde{B}_l}{\nabla f_{\widetilde{B}_l}|S}} = \frac{1}{b}\expect{S}{\var{i}{S_i|S}} = 
\frac{1}{b}
\left(
\underbrace{\var{i}{S_i}}_{\rom{1}} - 
\underbrace{\var{S}{\expect{i}{S_i|S}} }_{\rom{2}}
\right).
\]
Here ($\rom{1}$) is the variance of sampling one element from the buffer, before the buffer itself is known. Since every example from the dataset has the same probability of being the $i^\text{th}$ example in $S$, this variance is equal to the variance of the dataset itself, which is bounded by $\sigma^2$.

Moreover, ($\rom{2}$) is the variance of the average of $S$, and exactly like in $(i)$, it equals the pre-shuffle blockwise variance. Put together,
\[
    (ii): \expect{S}{\var{\widetilde{B}_l}{\nabla f_{\widetilde{B}_l}|S}} = \frac{1}{b}\expect{S}{\var{i}{S_i|S}} \leq \left(1 - \frac{1}{nb}\right)\frac{\sigma^2}{b}
\]

Combining the bounds for $(i)$ and $(ii)$ yield the result. For a full proof see Appendix~\ref{sec:proofs}.

\subsubsection{Convergence Rate Analysis}
The convergence rate of CorgiPile and $\corgii$ is expected to be slower (in terms of epochs) than that of random access SGD, especially when the individual buffers significantly differ from the distribution of the dataset as a whole. Specifically, larger values of $\frac{n}{N}$ would guarantee faster convergence time as more of the dataset is shuffled together in each iteration; and higher values of $h_D$ would hurt convergence time as the variance in each iteration is increased.

In the following theorem we revisit the convergence rate upper bound associated with CorgiPile and establish the extent to which our approach contributes to its reduction.


\begin{theorem} 
\label{thm:convergence}
Suppose that $F(\x)$ are smooth and $\mu$-strongly convex function. 
Let $T = \mathcal{T}nb$ be the total number of examples seen during training, where $\mathcal{T}\geq 1$ is the number of buffers iterated. Choose the learning rate to be  $\eta_t = \frac{6}{bn\mu(t+a)}$ 
where  $a \geq \max \left\{\frac{8LG + 24L^2 + 28L_H G}{\mu^2}, \frac{24L}{\mu} \right\}$. Then, $\corgii$, has the following convergence rate in the online stage for any choice of $\x_0$,

\begin{align} \label{eq:thm3}
    \E [F\left( \bar{\x}_{\mathcal{T}} \right) - F(\x^*)]  \le  ( 1-\alpha)h'_D\sigma^2 \frac{1}{T}  + \beta \frac{1}{T^2}  + \gamma \frac{  (Nb)^3}{T^3},
\end{align}
where $\bar{\x}_{\mathcal{T}} = \frac{\sum_t (t+a)^3 \x_t}{\sum_t (t+a)^3}$, and
\begin{align*}
\alpha := \frac{n-1}{N-1}, \beta := \alpha^2 + ( 1-\alpha)^2(b-1)^2, \gamma := \frac{n^3}{N^3}.
\end{align*}
\end{theorem}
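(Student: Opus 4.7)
The plan is to reduce the theorem to the convergence analysis of the original CorgiPile algorithm from \cite{corgi}, applied to the dataset produced by the offline step rather than to the original dataset. Observe that OfflineCorgiShuffle only resamples (with replacement) from the original set $\{f_1,\dots,f_m\}$, so every $\widetilde{f}$ appearing in some $\widetilde{B}_{l,j}$ is a copy of some $f_i$. Consequently, the pointwise assumptions on the $f_i$'s ($L$-Lipschitz gradient, $L_H$-Lipschitz Hessian, $G$-bounded gradient, smoothness, $\mu$-strong convexity of $F$) transfer verbatim to the reshuffled dataset, and $F(\x) = \frac{1}{m}\sum_i f_i(\x)$ remains the relevant target objective.

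First I would invoke the smooth strongly-convex convergence bound of CorgiPile from \cite{corgi}, applied \emph{conditionally} on a realization of the offline shuffle. Since the offline shuffle is independent of the online SGD randomness and of the step-size schedule $\eta_t = 6/(bn\mu(t+a))$, that bound yields
\begin{equation*}
    \E\bigl[F(\bar{\x}_{\mathcal{T}}) - F(\x^*) \,\bigm|\, \{\widetilde{B}_{l,j}\}\bigr] \le (1-\alpha)\,\widetilde{h}\,\sigma^2\,\tfrac{1}{T} + \beta\,\tfrac{1}{T^2} + \gamma\,\tfrac{(Nb)^3}{T^3},
\end{equation*}
where $\widetilde{h}$ is the realized block-wise variance parameter of $\{\widetilde{B}_{l,j}\}$ and $\alpha,\beta,\gamma$ depend only on $n,N,b$. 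Second, I would take the outer expectation over the offline shuffle: the $\beta$- and $\gamma$-terms pass through unchanged because they do not depend on $\widetilde{h}$, while the tower property together with Theorem~\ref{thm:variance_bound} gives $\E[\widetilde{h}\,\sigma^2/b] \le h'_D\,\sigma^2/b$ with $h'_D = 1 + (1/n - 1/(nb))\,h_D$. Linearity of expectation then assembles the three pieces into the bound stated in Equation~\ref{eq:thm3}.

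The main obstacle is justifying that the CorgiPile convergence bound can be invoked conditionally on \emph{any} realization of the reshuffled dataset. A subtlety is that the global bounded-variance hypothesis $\frac{1}{m}\sum_i\|\nabla f_i - \nabla F\|^2\le \sigma^2$ need not hold pointwise on every realization of $\{\widetilde{B}_{l,j}\}$, since sampling with replacement may overrepresent high-variance examples on some outcomes. The key observation that resolves this is that $\sigma^2$ enters the CorgiPile derivation only through the block-wise variance bound of Equation~\ref{eq:blockwise_var}; identifying the combined quantity $h_D\sigma^2/b$ as a single term lets us substitute the in-expectation bound $h'_D\sigma^2/b$ from Theorem~\ref{thm:variance_bound} at precisely the step where block-wise variance is first used. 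The remaining manipulations in the original proof rely only on the pointwise Lipschitz, bounded-gradient and strong convexity properties, all of which are preserved under resampling. Tracking this substitution carefully through the step-size bookkeeping and the definition of $\bar{\x}_{\mathcal{T}}$ completes the argument.
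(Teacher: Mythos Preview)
Your proposal is correct and follows essentially the same route as the paper: both wrap the CorgiPile convergence derivation in an outer expectation over the offline-shuffle randomness, note that the pointwise assumptions ($L$, $L_H$, $G$, strong convexity) survive resampling, and---rather than applying the final CorgiPile bound conditionally---go back to the step in the CorgiPile proof where the block-wise variance quantity $h_D\sigma^2/b$ first enters (the $I_4$ term) and substitute the in-expectation bound $h'_D\sigma^2/b$ from Theorem~\ref{thm:variance_bound} there. Your discussion of the ``main obstacle'' (that $\sigma^2$ need not hold on each realization, forcing the substitution to happen inside the CorgiPile derivation rather than at its conclusion) is precisely the point the paper makes when it singles out term~III and retreats to the earlier $I_4$ inequality.
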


A full proof is provided in Appendix \ref{sec:proofs}, but boils down to wrapping the convergence rate proved for CorgiPile in an expectation over the randomness of Offline Corgi and updating the expression accordingly. The convergence rate for CorgiPile in the same setting is,

\begin{align} \label{eq:thm1}
    \E [F\left( \bar{\x}_{\mathcal{T}} \right) - F(\x^*)]  \le (1-\alpha)h_D \sigma^2 \frac{1}{T}  + \beta \frac{1}{T^2}  + \gamma \frac{  N^3b^3}{T^3},
\end{align}

Observe that the difference between the methods is expressed in the replacement of the blockwise variance parameter $h_D$ with $h'_D$. As is shown in Theorem \ref{thm:variance_bound}, $h'_D$ will be lower in practically all cases. Here we see that $h'_D$ controls the convergence rate, as it linearly impacts the leading term $\frac{1}{T}$. 

 We now have a good grasp on the guiding principles of when $\corgii$ can be expected to converge significantly faster than CorgiPile. Specifically, when the original blocks are homogeneous, we expect that $h_D = \Theta(b)$, in which case $\corgii$ will improve the convergence rate by a factor of $1/n$ (where $n$ is the number of blocks in the buffer). On the other hand, when data is already shuffled, we expect that $h_D = \Theta(1)$, in which case $\corgii$ will not give a significant improvement, and can even hurt convergence. In the following subsection we show that, $\corgii$ improves data efficiency by a factor of $1/b$ over full shuffle.



\subsection{Complexity Analysis}

\begin{table}[ht]
\small
\caption{Number of data access queries for the different shuffling methods.}\label{tbl:complexity}
\begin{center}
\begin{tabular}{ c | c | c | c }
\toprule
Algorithm & \# Offline queries & \# Online queries & Total\\
\hline
 Random Access & $-$ &$\mathcal{T}m$ & $\mathcal{T}m$ \\ 
 Shuffle-Once  & $m+m/b$ & $\mathcal{T}m/b$ & $m+(\mathcal{T}+1)m/b$  \\  
 CorgiPile & $-$ & $\mathcal{T}m/b$ & $\mathcal{T}m/b$ \\
 $\corgii$ & $2m/b$ & $\mathcal{T}m/b$ & $(\mathcal{T}+2)m/b$   \\
\bottomrule
\end{tabular}
\end{center}
\end{table}

\label{sec:complexity}

Previous sections demonstrated that $\corgii$ improves the convergence rate of CorgiPile by a significant factor. We now turn our attention to quantifying the increase in query complexity that $\corgii$ induces.

We model the storage system as maintaining \textit{chunks} of $b$ examples, such that each read/write operation accesses an entire chunk - meaning that reading a single example or all $b$ in the same chunk, incurs the same cost. This straightforward modeling accurately reflects the cost structure of cloud-based data storage, as providers such as Amazon Web Services (AWS) charge a fixed price for each object access, regardless of the object's size \cite{aws, azure}. With this model in mind, we can compare different shuffling algorithms according to the simple metric of \textit{number of data access operations}.

Both CorgiPile and $\corgii$ are alternatives to more traditional approaches. For context, random access SGD requires $\mathcal{T} m$ queries, where $\mathcal{T}$ is the number of training epochs. The commonly applied approach of a single, full, offline shuffle and sequential online reading (see \cite{safran2020good}) requires $m+(\mathcal{T}+1)m/b$ queries: $m$ read operations for one example each, accompanied by $m/b$ write operations to recreate the dataset in shuffled chunks, and then $\mathcal{T}m/b$ read operations to fetch full chunks during training. 

Meanwhile, CorgiPile costs only $\mathcal{T}m/b$ queries in total, since each chunk is read exactly once in each epoch. The online phase of $\corgii$ is identical to that, and the preceeding offline phase costs $2m/b$ queries (read + write). Thus, up to a small constant factor, $\corgii$ demands the same number of queries as CorgiPile. Table \ref{tbl:complexity} summarizes the  complexity of all algorithms.

Lastly, we note that our metric expresses query complexity and not time complexity, since realistic executions of shuffle methods rely heavily on parallelization techniques, which might be limited by factors such as software implementation and the throughput limits of the storage system. The $\corgii$ algorithm itself imposes no bottlenecks on parallelization, meaning that it should enjoy similar benefits to runtime complexity as those of the other methods analyzed here.

\subsection{Discussion}
\label{sec:discussion}
\subsubsection{Possible modifications}
\begin{enumerate}
    \item \textbf{Repeated offline shuffles:} One intuitive way to modify $\corgii$ is to repeat the offline phase (\ref{alg:offline-corgi}) multiple times, to further reduce block variance before the online phase. This would incur a cost in query complexity, as outlined in \ref{tbl:complexity}, but each such repetition would lower the parameter $h_D$ according to \ref{eq:hd_change} (and consequently improve the convergence rate as described in \ref{thm:convergence}). Thus, the magnitude of the reduction diminishes exponentially with each further repetition, but query complexity increases linearly. While scenarios where this modification is useful may very well exists, it will usually be more cost effective to boost performance by increasing the number of blocks in the buffer.  
    \item \textbf{Sampling without replacement:} As mentioned in \ref{sec:setting}, the motivation for sampling with replacement in $\corgii$ was to streamline the theoretical analysis, despite understanding that sampling without replacement is preferred in real world applications. Empirically, most experiments in $\ref{sec:expms}$ were repeated in both ways, with no discernable differences.
    \item \textbf{Overwriting blocks to conserve storage:} \ref{sec:setting} mentions a modification that avoids doubling the storage requirements during execution of $\corgii$. This modification will simply have \ref{alg:offline-corgi} delete each block it finishes reading, thus maintaining the number of blocks. We note that this modification will result in permanent data loss, unless combined with sampling without-replacement.
    \item \textbf{Different parameters for offline and online stages: } The parameters $n$, $b$ have been assumed to be identical between \ref{alg:offline-corgi} and the online phase throughout the entire analysis. In practical applications it may be desirable to vary these values - for example, the offline phase may double up as an opportunity to restructure an existing dataset into blocks of a size better optimized for in the underlying storage system, in which case $b$ will be different between the phases.
\end{enumerate}

\section{Experiments}
\label{sec:expms}

\label{sec:exps1}
\begin{figure}[htbp]
  \centering
  \begin{subfigure}[b]{0.4\textwidth}
    \includegraphics[width=\textwidth]{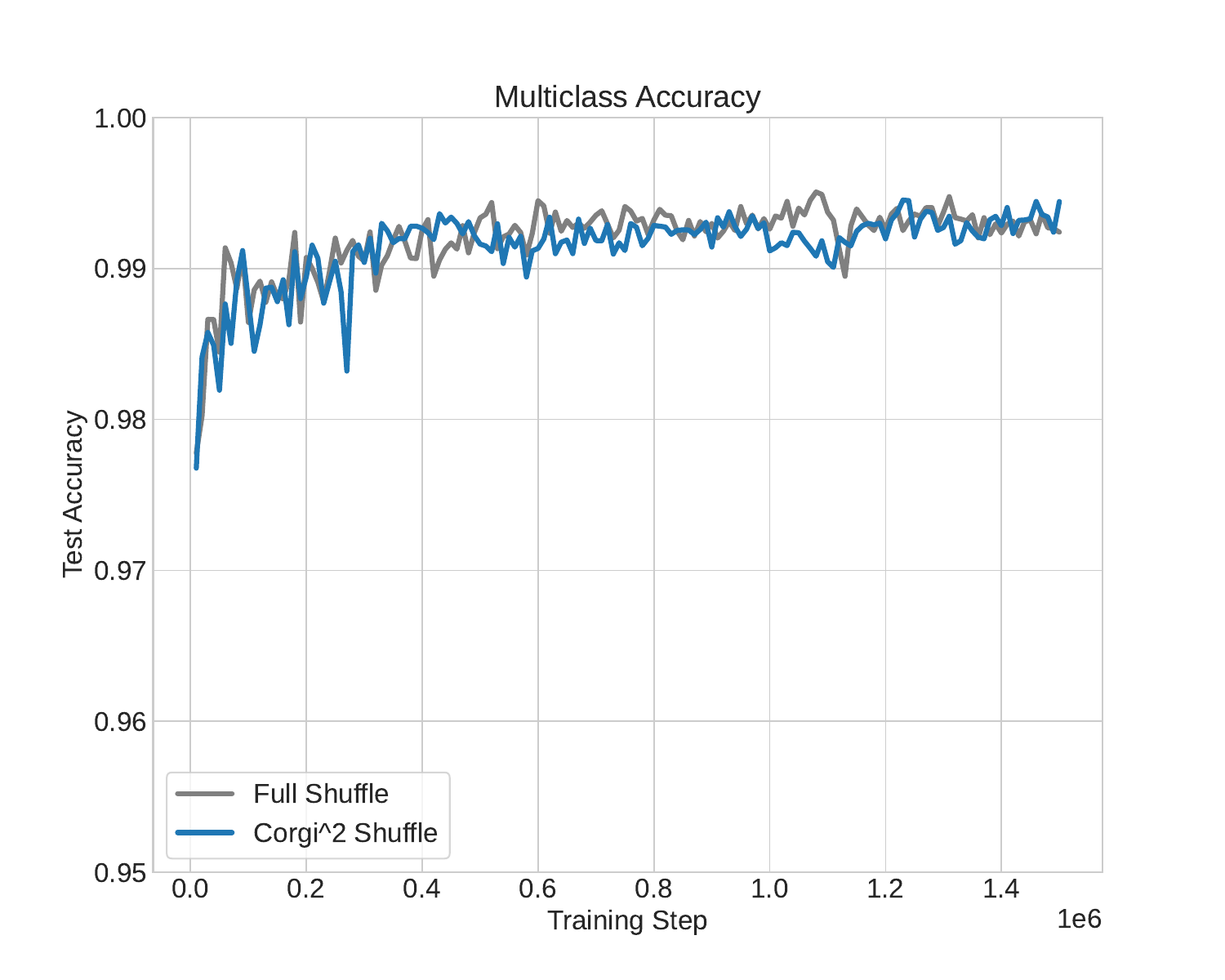}
    \label{fig:image1}
  \end{subfigure}
  \hspace{0.05\textwidth}
  \begin{subfigure}[b]{0.4\textwidth}
    \includegraphics[width=\textwidth]{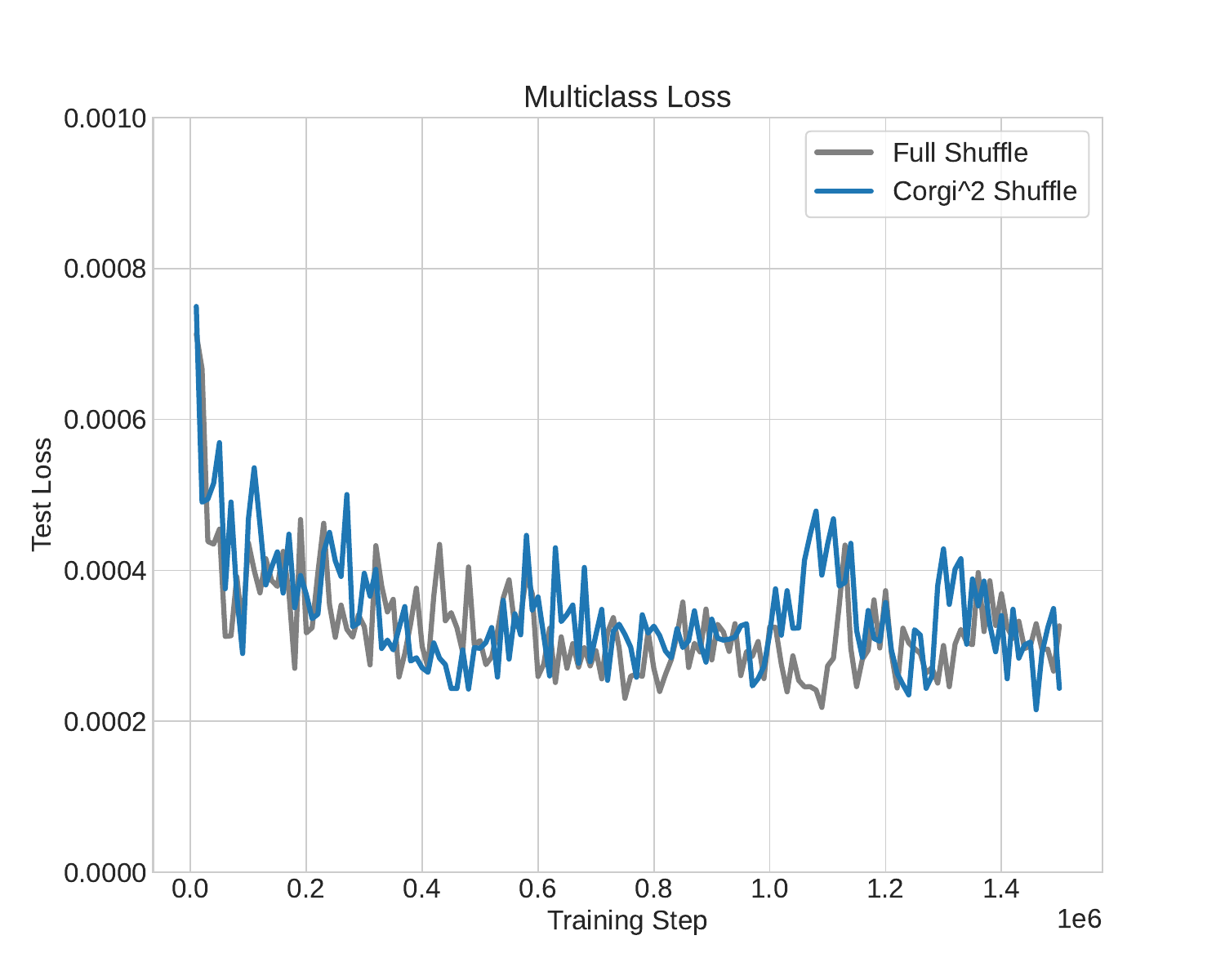}
    \label{fig:image2}
  \end{subfigure}
  \hspace{0.05\textwidth}
  \caption{Training performance comparison (accuracy and loss) between the shuffle methods on a proprietary image classification model.}
  \label{fig:side_by_side_images}
\end{figure}

\label{sec:exps2}
\begin{figure}[ht]
    \centering    \includegraphics[width=0.4\textwidth,height=\textheight,keepaspectratio]{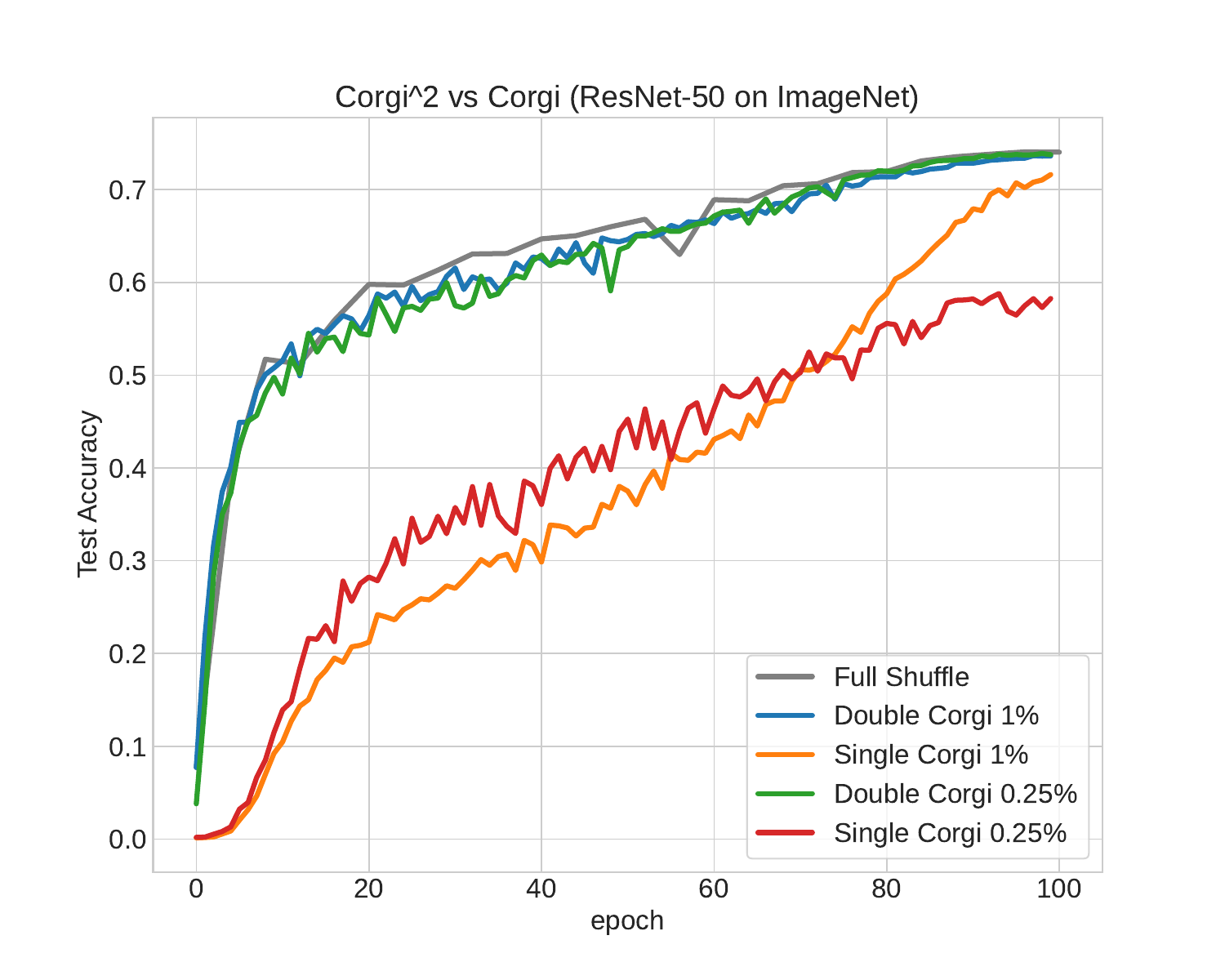}
    \centering
    \includegraphics[width=0.4\textwidth,height=\textheight,keepaspectratio]{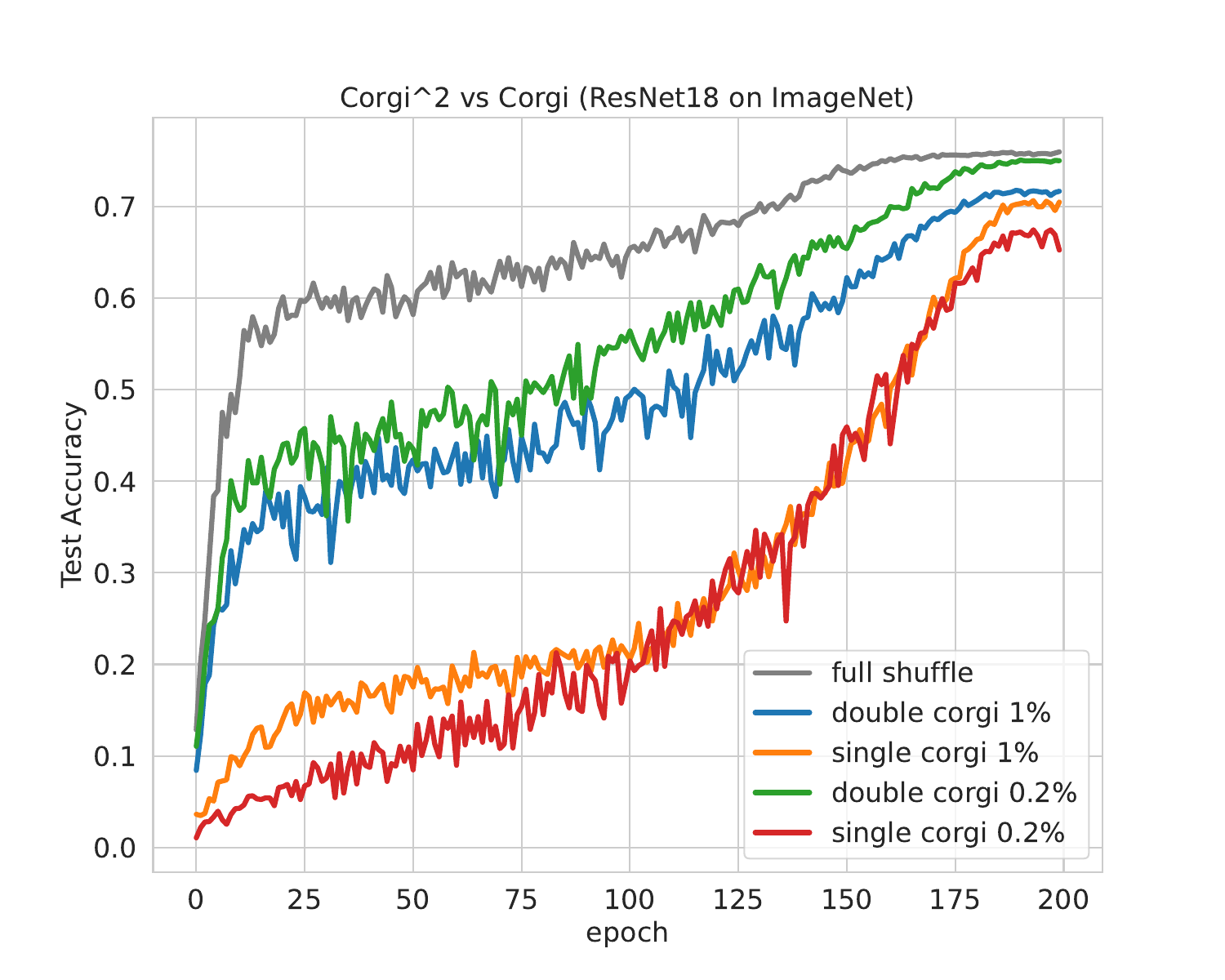}
    \caption{Training performance comparison between full shuffle, CoriPile and $\corgii$. Left: Test accuracy for ResNet-50 on ImageNet, Right: Test accuracy for CIFAR-100 on ResNet-18.}
    \label{fig:pic1}
    \centering    \includegraphics[width=0.4\textwidth,height=\textheight,keepaspectratio]{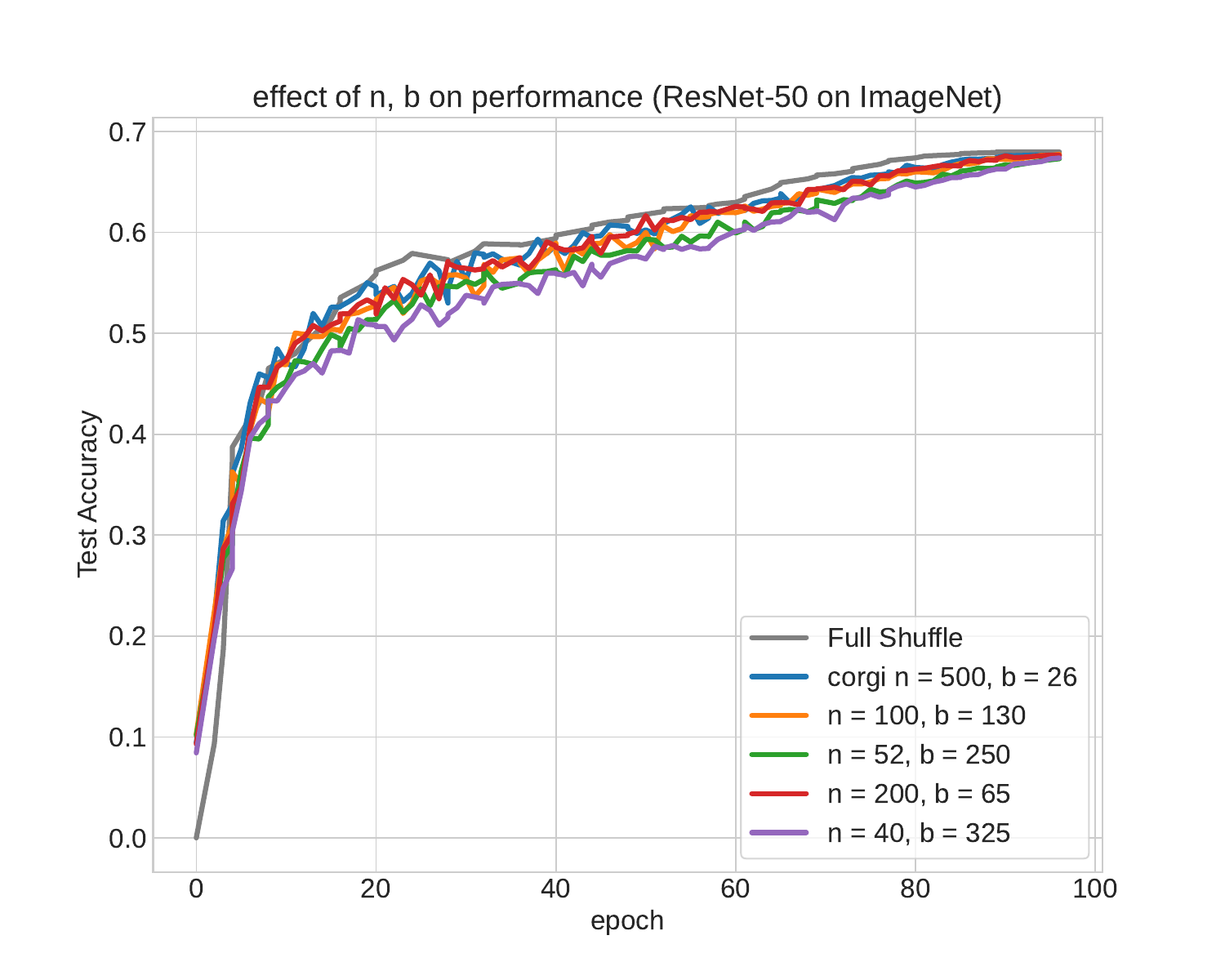}
    \centering
    \includegraphics[width=0.4\textwidth,height=\textheight,keepaspectratio]{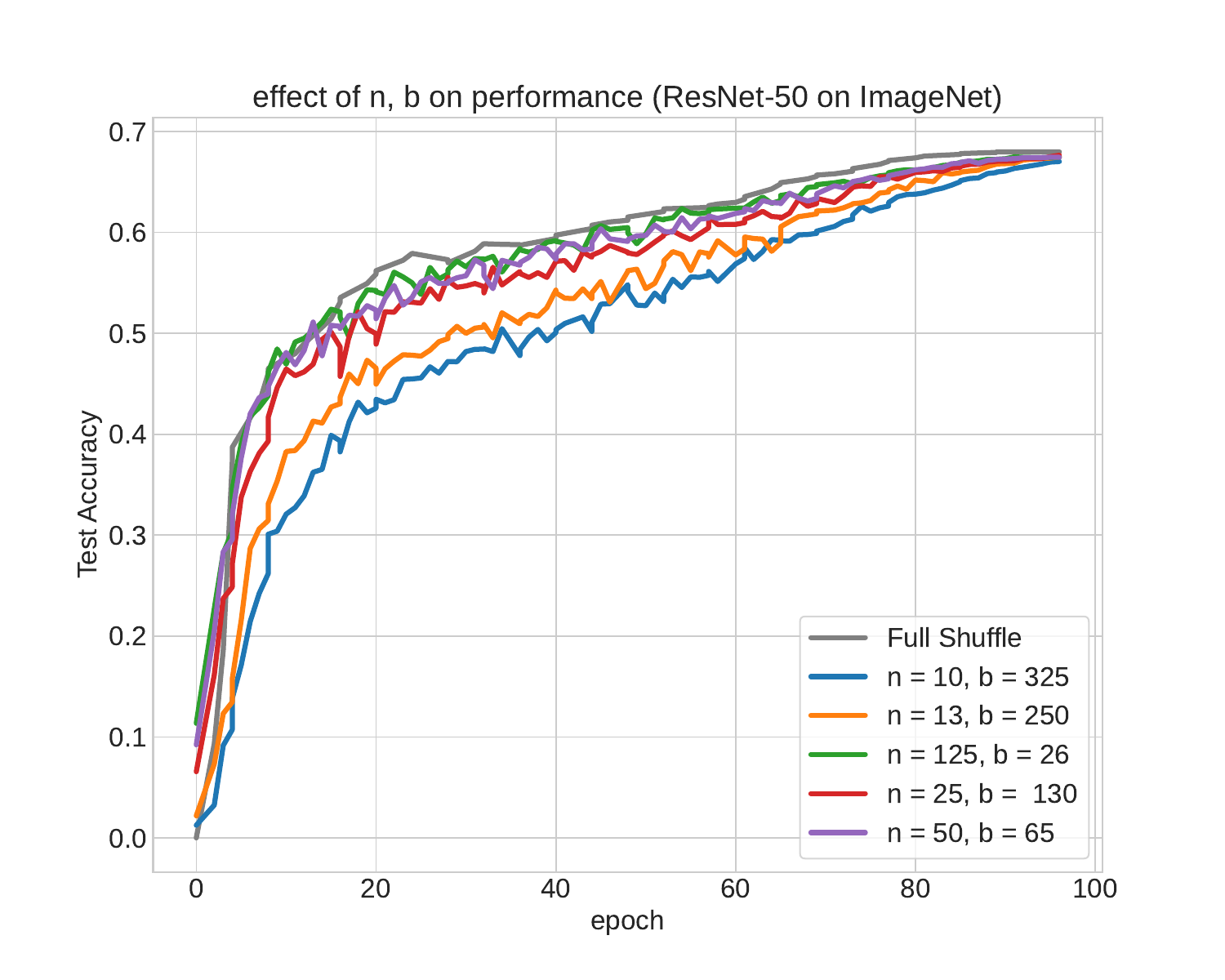}
    \caption{Impact on training performance of ResNet50 on ImageNet as in figure 4 above, when changing block size (b) and number of blocks in each buffer (n). Left: buffer size set to 0.25\%. Right: buffer size set to 1\%.}
\end{figure}

Thus far, we have observed that $\corgii$ offers substantial improvements in terms of theoretical guarantees over the original CorgiPile method, particularly in the context of homogeneously sharded data. In this section, we aim to empirically investigate the performance of $\corgii$ for such datasets and compare it to CorgiPile and full data shuffle, where the data is randomly reshuffled every epoch. CorgiPile has been shown \cite{corgi} to achieve parity with a full shuffle for buffer sizes of about 2\% or more, but noticeably dip in performance below that threshold. Our tests explore buffer sizes as low as 0.25\%, which are viable with common hardware for datasets of even hundreds of terabytes. 

Figure 3 displays the multiclass accuracy of an image classifier in Mobileye (an autonomous vehicle and computer vision company where this research was conducted). The model in question is tasked with classifying crops of objects taken out of real-world images, which in this experiment has been trained on 27TB of such data. The comparison is between a version of the dataset which was fully shuffled, and a version shuffled by a distributed $\corgii$ implementation where each node had 64GB of memory, for a buffer size of about 0.25\%.

Figure 4 displays an open source experiment we performed. We implemented $\corgii$ in the pytorch framework (and have made the code available via github) and  utilized it for training contemporary deep neural networks on standard image classification datasets, specifically ResNet-18 \cite{he2016deep} on CIFAR-100, and ResNet-50 on ImageNet \cite{deng2009imagenet}. 

We observe that while the ImageNet experiment lines up with the results on the internal experiment, in CIFAR100, ostensibly a less complex dataset, $\corgii$ does appear to lag behind full shuffle. We believe this to be the result of an edge case scenario that can occur for classifiers trained - see appendix~\ref{sec:app_experiments} for further details.

Figure 5 demonstrates the tradeoff between block size and buffer size - the same block sizes result in considerably better performance in the right plot (with a buffer size of 1\%) than the left one (with a buffer size of 0.25\%). While smaller blocks will boost the training performance for a given buffer size, a careful balance must be struck in order to maintain good data access efficiency.

\section{Acknowledgments}
We wish to gratefully acknowledge the contributions of many peers in Mobileye who contributed to and supported this research, with particular gratitude to: Boris Kodner, Aviel Stern, Daniel Kapash and Alon Netzer for assistance with benchmarking $\corgii$ on Mobileye data and applications; Tal Swisa for expert guidance on Pytorch implementation in a cloud environment; Yuval Stoler for his input on developing the theory; Ido Kenan, Ohad Shitrit and Guy Pozner for their roles in initiating and overseeing the research. GK is supported by a Simons Investigator Fellowship, NSF grant DMS-2134157, DARPA grant W911NF2010021,and DOE grant DE-SC0022199.

\bibliographystyle{plain}
\bibliography{refs}

\newpage

\appendix
\section{Variance}
\label{sec:variance}
In Theorem \ref{thm:variance_bound} we employ a generalization \cite{kocherlakota2004generalized} of scalar variance that can apply to vectors of arbitrary dimensions. Let $X \in \reals^d$ some random variable, and let $\mu = \expect{}{X}$.  then:
\begin{equation}\label{def:gen-var}
V(X) = \E \left[\norm{X-\mu}_2\right] = \E \left[(X-\mu)^T(X-\mu)\right]
\end{equation}

This diverges from the more common definition of variance:
\[
V(X) = V\left(\begin{pmatrix} x_1 \\ \vdots \\ x_n \end{pmatrix}\right) = \mathbb{E}\left[(X - \mu)(X-\mu)^T\right]
\]

Equation \ref{def:gen-var} is a generalization of variance in the sense that when $d=1$ we get the standard variance definition for scalar random variables.

This appendix specifies and proves all properties of this measure which are used in Section \ref{sec:proof_variance_bound}.

\textbf{1:} $V(X) = \expect{}{X^TX} - \mu^T\mu$

\begin{align*}
    V(X) &= \mathbb{E}\left[(X-\mu)^T(X-\mu)\right] = \mathbb{E}[X^TX] - \mathbb{E}[X^T\mu] - \mathbb{E}[\mu^TX] + \mu^T\mu \\
         &= \mathbb{E}[X^TX] - \mu^T\mu - \mu^T\mu + \mu^T\mu = \mathbb{E}[X^TX] - \mu^T\mu
\end{align*}

\textbf{2:} $V(aX) = a^2V(X)$
\[
    V(aX) = \expect{}{a(X-\mu)^Ta(X-\mu)} =  a^2 \expect{}{(X-\mu)^T(X-\mu)}) = a^2V(X)
\]

\textbf{3: } $V(X + Y) = V(X) + V(Y) + \cov{}{X,Y} + \cov{}{Y, X}$

Where $\cov{}{X,Y}$ is the cross covariance between $X$ and $Y$, defined as $\expect{}{[(X - \mu_X)^T(Y-\mu_Y)}$

\begin{align*}
    \var{}{X+Y} &= \expect{}{[(X+Y - \mu_X - \mu_Y)^T (X+Y - \mu_X - \mu_Y)} \\
    &= \expect{}{X^TX - X^T\mu_X - \mu_X^TX + \mu_X^T\mu_X} + \expect{}{Y^TY - Y^T\mu_Y - \mu_Y^TY + \mu_Y^T\mu_Y} \\
    &\quad + \expect{}{X^TY - X^T\mu_Y - \mu_X^TY + \mu_X^T\mu_Y} + \expect{}{Y^TX - Y^T\mu_X - \mu_Y^TX + \mu_Y^T\mu_X} \\
    &= \expect{}{X^TX} -\mu_X^T\mu_X - \mu_X^T\mu_X + \mu_X^T\mu_X + \expect{}{Y^TY} -\mu_Y^T\mu_Y - \mu_Y^T\mu_Y + \mu_Y^T\mu_Y \\
    &\quad + \expect{}{(X - \mu_X)^T(Y-\mu_Y)}  + \expect{}{(Y - \mu_Y)^T(X-\mu_X)} \\
    &= \expect{}{X^TX} - \mu_X^T\mu_X + \expect{}{Y^TY} - \mu_Y^T\mu_Y + \cov{}{X,Y} + \cov{}{Y, X} \\ 
    &= \var{}{X} + \var{}{Y} + \cov{}{X,Y} + \cov{}{Y, X}
\end{align*}

\textbf{4: } $V(X) =  \var{}{\expect{}{X|Y}} + \expect{}{\var{}{X|Y}} $  (Law Of Total Variance)

First,
\begin{align*}
    \mathbb{E}[X^TX] &= \mathbb{E}\left[\mathbb{E}[X^TX|Y]\right] = \mathbb{E}[X^TX] = \mathbb{E}\left[\mathbb{E}[X^TX|Y] - \mathbb{E}^2[X|Y] + \mathbb{E}^2[X|Y]\right] \\
                     &= \mathbb{E}[V(X|Y) + \mathbb{E}^2[X|Y]]
\end{align*}

Then,

\begin{align*}
    V(X) &= -\mathbb{E}^2[X] + \mathbb{E}[X^TX]  =
    -\mathbb{E}^2[\mathbb{E}[X|Y]] + \mathbb{E}[\mathbb{E}^2[X|Y] + V(X|Y)] \\
         &= -\mathbb{E}^2[\mathbb{E}[X|Y]] + \mathbb{E}[\mathbb{E}^2[X|Y]] + \mathbb{E}[V(X|Y)] \\
         &= V(\mathbb{E}[X|Y]) + \mathbb{E}[V(X|Y)]
\end{align*}

\section{Proofs}
\label{sec:proofs}
\subsection{Proof of Theorem \ref{thm:variance_bound}}\label{sec:proof_variance_bound}
\setcounter{theorem}{0}
\begin{theorem}\label{thm:variance_bound-app}
    Consider the execution of \emph{OfflineCorgiShuffle}  (refer to Algorithm \ref{alg:offline-corgi}) on a dataset characterized by a variance bound $\sigma^2$, a block-wise gradient variance parameter $h_D$, $N$ blocks each containing $b$ examples, and a buffer size $nb$. For all $\x$, the following inequality holds:
    \begin{equation} 
        \mathbb{E}\left[\frac{1}{N}\sum_{l=1}^N \left\|\nabla f_{\widetilde{B}_l}(\x) - \nabla F(\x) \right\|^2\right] \leq h'_D\frac{\sigma^2}{b},
    \end{equation}
    where $h'_D = 1 + \left(\frac{1}{n} - \frac{1}{nb}\right)h_D$, and $f_{\widetilde{B}_l}(\x)$ denotes the average of examples in ${\widetilde{B}_l}$, the $l$-th block generated by the \emph{OfflineCorgiShuffle} algorithm.
\end{theorem}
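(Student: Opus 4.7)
The plan is to recognize the left-hand side as a vector-valued variance in the sense of Appendix \ref{sec:variance} and then apply the law of total variance while conditioning on the randomly drawn buffer $S$. Specifically, treating $l$ as uniform over $\{1,\dots,N\}$, the sum can be rewritten as $\var{S,\widetilde{B}_l}{\nabla f_{\widetilde{B}_l}(\x)}$, and property (4) from the appendix splits this into an outer term $\var{S}{\expect{}{\nabla f_{\widetilde{B}_l}\mid S}}$ and an inner term $\expect{S}{\var{\widetilde{B}_l}{\nabla f_{\widetilde{B}_l}\mid S}}$. I would bound each separately, exploiting the fact that both $S$ (a union of $n$ i.i.d.\ blocks) and $\widetilde{B}_l\mid S$ (an i.i.d.\ sample of $b$ entries from $S$) are averages of independent pieces, so the Bienaymé-style scaling extends to the generalized variance.

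For the outer term, conditioning on $S$ gives $\expect{}{\nabla f_{\widetilde{B}_l}\mid S} = \nabla f_S$, the buffer average. Since $\nabla f_S$ is itself the average of $n$ independently chosen original blocks, independence gives $\var{S}{\nabla f_S} = \tfrac{1}{n}\var{i}{\nabla f_{B_i}}$, and the assumption \eqref{eq:blockwise_var} bounds this by $\tfrac{1}{n}\cdot h_D\tfrac{\sigma^2}{b}$. This contributes the $\tfrac{h_D}{n}\cdot\tfrac{\sigma^2}{b}$ piece of $h'_D$.

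For the inner term, conditional on $S$ the new block is an average of $b$ i.i.d.\ draws from the $nb$ gradients in $S$, so independence peels off another $1/b$: $\expect{S}{\var{\widetilde{B}_l}{\nabla f_{\widetilde{B}_l}\mid S}} = \tfrac{1}{b}\expect{S}{\var{i}{\nabla f_{S_i}\mid S}}$. Re-applying the law of total variance in the reverse direction, this equals $\tfrac{1}{b}\bigl(\var{i}{\nabla f_{S_i}} - \var{S}{\expect{}{\nabla f_{S_i}\mid S}}\bigr)$. Since every dataset example is equally likely to occupy any slot in $S$, the first term is just the variance of a uniformly selected dataset example and is bounded by $\sigma^2$; the second term is again the variance of the buffer average, at most $\tfrac{h_D\sigma^2}{nb}$ as in the outer step. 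Combining yields an inner-term bound of $\tfrac{1}{b}\bigl(\sigma^2 - \tfrac{h_D\sigma^2}{nb}\bigr)$, and summing with the outer term gives $h'_D\sigma^2/b$ with $h'_D = 1 + \left(\tfrac{1}{n} - \tfrac{1}{nb}\right)h_D$.

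The main obstacle I anticipate is careful bookkeeping of which randomness each variance and expectation is taken over: the original dataset, the block-selection step filling $S$, and the per-example resampling step producing $\widetilde{B}_l$ are three distinct sources that must not be conflated. In particular, the factors of $1/n$ and $1/b$ rely on cross-covariance terms in property (3) of the appendix vanishing, which is exactly what sampling \emph{with replacement} guarantees; this is precisely why Algorithm \ref{alg:offline-corgi} is stated in the with-replacement form for the purposes of the theorem.
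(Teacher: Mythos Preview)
Your plan matches the paper's proof essentially step for step: both condition on the buffer $S$, split via the law of total variance into an outer term $\var{S}{\nabla f_S}$ and an inner term $\tfrac{1}{b}\,\expect{S}{\var{i}{S_i\mid S}}$, and then re-apply the law of total variance inside the inner term to extract the dataset-level variance $\sigma^2$. One small caution on your bookkeeping: when you write the inner-term bound as $\tfrac{1}{b}\bigl(\sigma^2 - \tfrac{h_D\sigma^2}{nb}\bigr)$ you are subtracting an \emph{upper} bound on $\var{S}{\nabla f_S}$, which is not a valid inequality by itself; the argument is rescued only because the subtracted quantity is the \emph{same} $\var{S}{\nabla f_S}$ that appears as the outer term, so after summing the two pieces its net coefficient is $(1-\tfrac{1}{b})>0$ and the upper bound $\var{S}{\nabla f_S}\le \tfrac{h_D\sigma^2}{nb}$ can then be applied legitimately to the combined expression.
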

\subsubsection{Proof}
First we establish notation that will be used throughout the proof:
\begin{itemize}
    \item $f_i$, the loss of $x$ over the $i$-th element of the dataset, or simply "the $i$-th function". $\nabla f_i$ is $\in [1,...,Nb]$
    \item $B_i = \begin{pmatrix} f_{ib} \\ \vdots \\ f_{(i+1)b} \end{pmatrix}$ is the $i$-th block
    \item $f_{B_l} = \sum_{i=lb}^b{(l+1)b}$ is the average of functions in the $l$-th block. 
    \item For each of the $\frac{N}{n}$ iteration of OfflineCorgi (Algorithm \ref{alg:offline-corgi}):
          \begin{itemize}
              \item $S =  \begin{pmatrix} B_{i_1} \\ \vdots \\ B_{i_n} \end{pmatrix}$ is a random vector composed of $n$ uniform i.i.d selections of blocks.$S_i$ is the $i$-th row of $S$, corresponding to a single function.
              \item $\widetilde{B}_l = \begin{pmatrix} S_{i_1} \\ \vdots \\ S_{i_b} \end{pmatrix}$ if the $l$-th of $n$ new blocks created this round, composed of $b$ uniform i.i.d selections of rows from $S$.
              \item $f_{\widetilde{B}_l}$ is the block average for new blocks, and $f_S$ is the buffer average, defined similarly to $f_{B_l}$ above.
          \end{itemize}
\end{itemize}

\begin{lemma}
    \label{lem:iid}
    Let $\{\widetilde{B}_l\}_{l=1}^{nj}$ be the set of blocks created by $j$ iterations of the main loop in OfflineCorgi (Algorithm \ref{alg:offline-corgi}). Let $i$ be an index uniformly selected from $[1,..,jn]$. Then the \textbf{r.v} $\widetilde{B}_i$ has the same distribution, for all $j$.
\end{lemma}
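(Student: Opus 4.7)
The plan is to show that every single block $\widetilde{B}_{l,k}$ generated by \emph{OfflineCorgiShuffle} (for iteration index $l \in [j]$ and within-iteration index $k \in [n]$) has the same \emph{marginal} distribution, call it $\mathcal{D}$, depending only on $\{B_i\}_{i=1}^N$, $n$ and $b$. Once this is established, sampling $i$ uniformly from $[jn]$ and returning $\widetilde{B}_i$ is, by total probability, the same as drawing from the uniform mixture of $jn$ copies of $\mathcal{D}$, which is just $\mathcal{D}$ itself. In particular the distribution of $\widetilde{B}_i$ does not depend on $j$, which is exactly the claim.

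The first step is to fix a single iteration $l$ and argue that, conditional on the buffer $S$, the blocks $\widetilde{B}_{l,1},\dots,\widetilde{B}_{l,n}$ are i.i.d., since each one is obtained by an independent with-replacement sample of $b$ rows from $S$. Integrating $S$ out, the $n$ blocks produced in iteration $l$ share a common marginal distribution (they are exchangeable, though not independent, because of the shared buffer). The second step is to observe that different iterations of the outer loop invoke independent copies of the exact same randomized subroutine, applied to the same underlying collection $\{B_i\}_{i=1}^N$. Hence the marginal distribution of $\widetilde{B}_{l,k}$ also does not depend on $l$. Combining the two steps gives the desired common marginal $\mathcal{D}$ for all $jn$ blocks.

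The third step is essentially a formality: the statement of the lemma re-indexes the $jn$ blocks by a single label $l \in [jn]$, so I would fix a bijection between $[j]\times[n]$ and $[jn]$ and write, for any measurable event $A$,
\[
\Prob{\widetilde{B}_i \in A} \;=\; \frac{1}{jn}\sum_{l=1}^{j}\sum_{k=1}^{n}\Prob{\widetilde{B}_{l,k}\in A} \;=\; \mathcal{D}(A),
\]
which is independent of $j$ by the preceding two steps.

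I do not expect a serious obstacle here; the only subtlety worth calling out carefully is the distinction between marginal and joint distributions. The $n$ blocks produced inside one iteration are \emph{not} jointly independent (they share the buffer $S$), so one must be careful to state the conclusion only at the level of marginals. Once that distinction is made, the argument reduces to standard exchangeability/i.i.d. reasoning and total probability, with no real computation required.
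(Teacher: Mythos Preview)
Your proposal is correct and follows essentially the same approach as the paper: both arguments observe that the outer-loop iterations are i.i.d.\ copies of the same randomized procedure, decompose the uniform index $i$ into an iteration component and a within-iteration component, and conclude that the resulting distribution is independent of $j$. Your write-up is somewhat more explicit than the paper's (you argue separately that the $n$ blocks inside a single iteration share a common marginal via exchangeability, which the paper's terse proof does not spell out), but this is a minor elaboration rather than a different route.
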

\begin{proof}
    Denote $\cb^j = (B_{j'n}, \dots B_{(j'+1)n-1})$ the blocks generated by the $j'$ iteration. Notice that $\cb^1, \dots, \cb^j$ are i.i.d. random variables.
    Define the random variable $x \sim [1,...,j]$ and $y \sim [1,...,n]$ such that $i = xn + y$.
    Then, $\widetilde{B}_i = \cb_y^x$, and the required follows.  
\end{proof}

Assume that there is exactly one iteration of OfflineCorgi. Using the above notation, we can rewrite the theorem as:
\[
    \var{S,\widetilde{B}, j}{\nabla f_{\widetilde{B}_j}} \leq h_D'\frac{\sigma^2}{b}
\]
where $j$ is an index sampled uniformly from $[1,..,n]$, and $V$ is the generalized scalar variance discussed in Appendix  \ref{sec:variance}. According to Lemma \ref{lem:iid}, proving this is sufficient to prove the general case of Theorem \ref{thm:variance_bound}.
Using the law of total variance:
\[
\frac{1}{N}\sum_{l=1}^N \left\|\nabla f_{\widetilde{B}_l}(\x) - \nabla F(\x) \right\|^2 = \var{S,\widetilde{B}_l}{\nabla f_{\widetilde{B}_l}(\x)} = 
\underbrace{\var{S}{\expect{\widetilde{B}_l}{\nabla f_{\widetilde{B}_l}|S}}}_{(i)} + 
\underbrace{\expect{S}{\var{\widetilde{B}_l}{\nabla f_{\widetilde{B}_l}|S}}}_{(ii)}  
\]

\textbf{(i):}

When $S$ is fixed, each $\widetilde{B}$ is a uniform i.i.d selection of $b$ functions from $S$. Let $\begin{pmatrix} z_1\\ \vdots \\ z_nb \end{pmatrix}$ be a random vector s.t $z_i$ is a random variable for the number of times $S_i$ has been selected in this process to a given $\widetilde{B}$. Then $\widetilde{B_l}$ can be written as $ZS$, where $Z$ is a diagonal matrix with $Z_{i,i} = z_i$. The resulting r.v is a multinomial distribution with $b$ experiments and $nb$ possible results per experiment, each with an equal probability $\frac{1}{nb}$. Thus:
\begin{align*}
    \expect{\widetilde{B}_l}{\nabla f_{\widetilde{B}_l}|S} &=
    \nabla \left(\frac{1}{b}\sum_{i=1}^{nb}{\left(\expect{Z}{ZS}\right)_i} \right) \\
    &= \nabla \left(\frac{1}{b}\sum_{i=1}^{nb}\expect{}{z_i}S_i\right) 
    = \nabla \left(\sum_{i=1}^{nb}\frac{1}{n}S_i\right) \\
    &=  \nabla \left(\frac{1}{nb}\sum_{i=1}^{nb}s_i \right) = \nabla f_S
\end{align*}

We now have:
\[
    \var{S}{\expect{\widetilde{B}_l}{\nabla f_{\widetilde{B}_l}|S}} = \var{S}{\nabla f_S}
\]

For a given $ \begin{pmatrix} B_{i_1} \\ \vdots \\ B_{i_n} \end{pmatrix}$, we have $\nabla f_S = \frac{1}{n}\sum_{m=1}^n\nabla f_{B_{i_m}}$ where $i_1,...,i_n$ are the $n$ blocks selected for $S$. Thus:
\[
    \var{S}{\nabla f_S} = \var{S}{\frac{1}{n}\sum_m\nabla f_{B_{i_m}}} = \frac{1}{n^2}\var{}{\sum_m\nabla f_{B_{i_m}}} \leq \frac{1}{n}h_D\frac{\sigma^2}{b}
\]
Where the inequality is due to the $n$ block selections being i.i.d and the upper bound on block variance is described in assumption 5 in Section \ref{lst:assumptions}.

\textbf{(ii):}

Let i $i$ be a uniformly sampled index in the range $[1,...,nb]$. For a fixed $S$, define the \textit{sampling variance} to be 
\[
\var{i}{S_i|S} = \expect{i}{S_i^TS_i} - \nabla f_S^T \nabla f_S  =
\frac{1}{nb}\sum_i{\innerproduct{s_i}{s_i}} - \frac{1}{(nb)^2}\left(\sum_i{\innerproduct{s_i}{s_i}} + \sum_{i \neq j}{\innerproduct{s_i}{s_j}}\right)
\]
This, in other words, is the variance of uniformly sampling a function from $S$.

We wish to find $\var{}{\widetilde{B}|S}$. We define random variables as we did in $(i)$ and apply Bienaymé's identity:

\begin{align*}
    \var{i}{\nabla f_{\widetilde{B_j}}|S}
    &= \frac{1}{b^2} V\left(\sum_{i=1}^{nb} z_i s_i\right) \\
    &= \frac{1}{b^2}\left(\sum_i{V(z_i)\innerproduct{s_i}{s_i}} + \sum_{i \neq j}{\text{COV}(z_i, z_j)\innerproduct{s_i}{s_j}}\right) \\
    &= \frac{1}{b^2}\left(\sum_i{b\frac{1}{nb}(1-\frac{1}{nb})\innerproduct{s_i}{s_i}} + \sum_{i \neq j}{\left(-\frac{b}{n^2b^2}\right)\innerproduct{s_i}{s_j}}\right) \\
    &= \frac{1}{b^2}\left(\sum_i{\frac{1}{n}\innerproduct{s_i}{s_i}} -\sum_i{\frac{1}{n^2b}\innerproduct{s_i}{s_j}} - \sum_{i \neq j}{\frac{1}{n^2b}\innerproduct{s_i}{s_j}}\right) \\
    &= \frac{1}{b}\left(\frac{1}{nb}\sum_i{\innerproduct{s_i}{s_i}} - \frac{1}{n^2b^2}\left(\sum_i{\innerproduct{s_i}{s_i}} + \sum_{i \neq j}{\innerproduct{s_i}{s_j}}\right)\right) \\
    &= \frac{1}{b}\var{i}{S_i|S}
\end{align*}

We now have:

\[
\expect{S}{\var{\widetilde{B}_l}{\nabla f_{\widetilde{B}_l}|S}} = \frac{1}{b}\expect{S}{\var{i}{S_i|s}}
\]

We further decompose this expression by a second application of the law of total variance:

\[
\expect{S}{\var{i}{S_i}|S} = 
\underbrace{\var{i}{S_i}}_{\rom{1}}
- \underbrace{\var{S}{\expect{i}{S_i|S}} }_{\rom{2}}
\]

\textbf{\rom{2}}: $\expect{i}{S_i|S}$ is the expected value of sampling a function from a fixed $S$, which is simply $\nabla f_S$. Duplicating the calculation done for $(i)$, $\var{S}{\nabla f_S} \leq h_D\frac{\sigma^2}{b}$.

\textbf{\rom{1}}: When $S$ is not fixed, it is a uniform i.i.d selection of blocks for $[B_1,...,B_N]$. Let $\begin{pmatrix} z_1\\ \vdots \\ z_N \end{pmatrix}$ be a random vector s.t $z_i$ is a random variable for the number of times $B_i$ has been selected by this process for a given $S$. Then $S$ can be written as $ZB$ where $Z$ is a diagonal matrix with $Z_{i,i} = z_i$, and $B = \begin{pmatrix} B_1\\ \vdots \\ B_N \end{pmatrix}$. 

Since $S$ is a multinomial with $n$ experiments and $N$ possible results with probability $\frac{1}{N}$,
\[
    \forall i, k \geq 0 \quad
    \text{Pr}(z_i = k) = {n \choose k} \left(\frac{1}{N}\right)^k \left(\frac{N -1}{N}\right)^{n - k}
\]

Let $f$ be any function in some block $B_j$. Then: 

\begin{align*}
    \Prob{S_i = f} &= \sum_{k=1}^n{\Prob{S_i = f | z_j = k}\Prob{z_j = k}} \\
                 &= \sum_{k=1}^n{{n \choose k} \left(\frac{1}{N}\right)^k \left(\frac{N -1}{N}\right)^{n - k}\frac{k}{nb}} \\
                 &= \frac{1}{nb}\sum_{k=1}^n{{n \choose k} \left(\frac{1}{N}\right)^k \left(\frac{N -1}{N}\right)^{n - k}k} = \\
                 &= \frac{1}{nb}\expect{}{z_i} = \frac{1}{nb}\frac{n}{N} = \frac{1}{Nb}
\end{align*}
Observe that a sample from $S$ has the same distribution as a sample from the dataset itself,  which is bound by $\sigma^2$ according to assumption 5 in Section \ref{lst:assumptions}. 

Overall we get 

\[
    \expect{S}{\var{\widetilde{B}_l}{\nabla f_{\widetilde{B}_l}|S}} = \frac{1}{b}\expect{S}{\var{i}{S_i|s}}
    \leq \frac{1}{b}\left(\sigma^2 + \frac{1}{n}h_D\frac{\sigma^2}{b}\right)
\]

And the variance reduction of Theorem \ref{thm:variance_bound} is achieved by plugging in $(i)$ and $(ii)$

\subsection{Theorem 2}
\begin{theorem} 
Suppose that $F(\x)$ are smooth and $\mu$-strongly convex function. 
Let $T = \mathcal{T}nb$ be the total number of examples seen during training, where $\mathcal{T}\geq 1$ is the number of buffers iterated. Choose the learning rate to be  $\eta_t = \frac{6}{bn\mu(t+a)}$ 
where  $a \geq \max \left\{\frac{8LG + 24L^2 + 28L_H G}{\mu^2}, \frac{24L}{\mu} \right\}$. Then, $\corgii$, has the following convergence rate in the online stage for any choice of $\x_0$,

\begin{align} 
    \E [F\left( \bar{\x}_{\mathcal{T}} \right) - F(\x^*)]  \le  ( 1-\alpha)h'_D\sigma^2 \frac{1}{T}  + \beta \frac{1}{T^2}  + \gamma \frac{  (Nb)^3}{T^3},
\end{align}
where $\bar{\x}_{\mathcal{T}} = \frac{\sum_t (t+a)^3 \x_t}{\sum_t (t+a)^3}$, and
\begin{align*}
\alpha := \frac{n-1}{N-1}, \beta := \alpha^2 + ( 1-\alpha)^2(b-1)^2, \gamma := \frac{n^3}{N^3}.
\end{align*}
\end{theorem}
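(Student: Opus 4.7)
The plan is to recast $\corgii$ as the original CorgiPile algorithm run on the random blocks $\{\widetilde{B}_l\}$ output by \emph{OfflineCorgiShuffle}, apply the CorgiPile convergence bound conditional on the offline randomness, and then take an outer expectation over that randomness to convert the $h_D$ appearing in the leading term into $h'_D$. Concretely, let $\mathcal{E}$ denote all randomness consumed by the offline phase, so that the block decomposition $\{\widetilde{B}_l(\mathcal{E})\}$ is deterministic given $\mathcal{E}$, and let $\mathcal{R}$ denote the independent online randomness used to sample buffers and shuffle them. Conditional on $\mathcal{E}$, the online phase of $\corgii$ is literally CorgiPile on a fixed block decomposition, so the CorgiPile bound (equation \ref{eq:thm1}) applies with the (random) blockwise variance parameter $\tilde h_D(\mathcal{E})$ of the new blocks in place of $h_D$.

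The heart of the argument is tracking how the blockwise variance enters. Inside the CorgiPile proof, $h_D$ appears only through bounds of the form $\frac{1}{N}\sum_l \|\nabla f_{B_l}(\x_t) - \nabla F(\x_t)\|^2 \le h_D \sigma^2/b$ applied at various iterates $\x_t$, and these bounds feed linearly into the final estimate. Theorem \ref{thm:variance_bound} supplies, for every fixed $\x$, the pointwise guarantee $\mathbb{E}_{\mathcal{E}}\bigl[\frac{1}{N}\sum_l \|\nabla f_{\widetilde{B}_l}(\x) - \nabla F(\x)\|^2\bigr] \le h'_D \sigma^2/b$. Wrapping the CorgiPile recursion in an outer $\mathbb{E}_{\mathcal{E}}$ and exchanging the order of expectations via the tower property converts each use of $h_D \sigma^2/b$ into $h'_D \sigma^2/b$, while the constants $\alpha$, $\beta$, $\gamma$ and the $1/T^2$, $1/T^3$ orders depend only on $n$, $N$, $b$ and are therefore unchanged.

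The main obstacle is precisely the swap of expectations just invoked: at step $t$, the iterate $\x_t$ depends on both $\mathcal{E}$ and $\mathcal{R}$, so Theorem \ref{thm:variance_bound} cannot be applied at $\x = \x_t$ in a naive way. My plan is to condition on the entire history of $\mathcal{R}$ up to step $t$, which makes $\x_t$ a measurable function of $\mathcal{E}$ alone, and then observe that inside the CorgiPile proof the blockwise-variance assumption is always used after first taking an expectation over the fresh block indices that form the current buffer. Since those indices (drawn from $\mathcal{R}$) are independent of $\mathcal{E}$ and the bound of Theorem \ref{thm:variance_bound} holds uniformly in $\x$, exchanging these two expectations delivers the desired replacement. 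Once this bookkeeping is set up, the remainder of the CorgiPile analysis carries through verbatim and yields equation \ref{eq:thm3}.
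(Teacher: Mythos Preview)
Your proposal is correct and mirrors the paper's own proof: the paper likewise treats the online phase as CorgiPile on the offline-generated blocks, wraps the CorgiPile inequality chain in an outer expectation $\mathbb{E}_{S,\widetilde{B}}[\cdot]$ over the OfflineCorgiShuffle randomness, observes that terms \rom{1}, \rom{2}, \rom{4} pass through unchanged, and invokes Theorem~\ref{thm:variance_bound} at the point where $h_D$ first enters (the $I_4$ term in the CorgiPile analysis) to replace $h_D\sigma^2$ by $h'_D\sigma^2$. Your explicit flagging of the measurability issue for $\x_t$ is actually more careful than the paper's own treatment, which applies Theorem~\ref{thm:variance_bound} at the random iterate $X_0^t$ without further comment.
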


\subsubsection{Proof}
This theorem corresponds very closely to Theorem 1 in \cite{corgi}. In fact, our proof is not a complete derivation of the convergence rate, but rather an application of the variance reduction obtained in Theorem \ref{thm:variance_bound} to the existing convergence rate derived for CorgiPile. 

Since the online phase of $\corgii$ is identical to CorgiPile,  most of the logic used in deriving the convergence rate for CorgiPile should be applicable for $\corgii$. However, in CorgiPile the dataset itself is non stochastic, while $\corgii$ generates it in the offline phase, introducing new randomness. The CorgiPile convergence rate is:

\begin{align}
    \expect{\text{CogiPile}}{F\left( \bar{\x}_{\mathcal{T}} \right) - F(\x^*)}  \le  ( 1-\alpha)h_D\sigma^2 \frac{1}{T}  + \beta \frac{1}{T^2}  + \gamma \frac{  (Nb)^3}{T^3}
\end{align}

and our modification can be seen as taking the expected value over the offline randomness:

\begin{align}
    \expect{\text{OfflineCorgiShuffle}}{\expect{\text{CogiPile}}{F\left( \bar{\x}_{\mathcal{T}} \right) - F(\x^*)}}  \le  ?
\end{align}

The following proof does not provide a comprehensive reconstruction of the one for CorgiPile convergence rate\cite{corgi-proofs}, because  The majority of steps in that proof remain unaffected when encapsulated within a new expectation expression. Consequently, the subsequent section of this proof will refer directly to sections in the CorgiPile convergence rate proof without providing complete statements here.

An essential observation to note is that assumptions 1-4 in Section \ref{lst:assumptions} impose  \textit{upper bounds} on properties of \textit{all} individual or pairs of samples from the dataset. Given that OfflineCorgiShuffle (Algorithm \ref{alg:offline-corgi}) outputs a subset (with repetitions) of the origianl dataset,  these assumptions are ensured to remain valid. For this reason, any step in the CorgiPile proof which replaces an expression with $L$, $G$ or $H$ works as-is for the $\corgii$ proof.

The CorgiPile proof begins by taking a known upper bound on $\expect{\text{CorgiPile}}{||X_0^{t+1} - X^*||^2}$, and develops it untilthe following inequality is reached:
\begin{align*}
    \underbrace{\eta_tbn\left(F(X_0^t) - F(X^*)\right)}_{\rom{1}}
    &\leq \underbrace{\left(1 - \frac{1}{2}\eta_tbn\mu\right)||X_0^t - X^*||^2 -\expect{}{||X_0^{t+1} - X^*||^2}}_{\rom{2}} 
    + \underbrace{C_2\eta_t^2nb\frac{N-n}{N-1}h_D\sigma^2}_{\rom{3}} \\
    &+ \underbrace{C3\eta_t^3bn\left[ \left(\frac{N-n}{N-1}\right)^2(b-1)^2+   \left(\frac{N-1}{N-1}\right)^2\right] + C_4\eta_t^4b^4n^4}_{\rom{4}}
\end{align*}

Note that the notation here differs slightly from the one that can be found in the CorgiPile paper as we use $t$ to denote the round number instead of $s$. Additionally, we'll use $\expect{S,\widetilde{B}}{}$ to express taking an expected value over the randomness of OfflineCorgiShuffle.

Taking the expectation over OfflineCorgiShuffle randomness on both sides of this inequality  has the following effects:
\begin{itemize}
    \item  \rom{1}
    \[
        \expect{S,\widetilde{B}}{ \eta_tbn\left(F(X_0^t) - F(X^*)\right) } = \eta_tbn\expect{S,\widetilde{B}}{\left(F(X_0^t) - F(X^*)\right)}
    \]
    \item \rom{2}
        \begin{align*}
            &\expect{S,\widetilde{B}}{\left(1 - \frac{1}{2}\eta_tbn\mu\right)||X_0^t - X^*||^2 -\expect{}{||X_0^{t+1} - X^*||^2}} \\
            &\quad = \left(1 - \frac{1}{2}\eta_tbn\mu\right)\expect{S,\widetilde{B}}{||X_0^t - X^*||^2} - \expect{S,\widetilde{B}}{\expect{}{||X_0^{t+1} - X^*||^2}}
        \end{align*}
    \item \rom{3} - see discussion below
    \item \rom{4}
        \begin{align*}
            &\expect{S,\widetilde{B}}{C3\eta_t^3bn\left[ \left(\frac{N-n}{N-1}\right)^2(b-1)^2+   \left(\frac{N-1}{N-1}\right)^2\right] + C_4\eta_t^4b^4n^4} \\
            &= C3\eta_t^3bn\left[ \left(\frac{N-n}{N-1}\right)^2(b-1)^2+   \left(\frac{N-1}{N-1}\right)^2\right] + C_4\eta_t^4b^4n^4
        \end{align*}
\end{itemize}

As for \rom{3}, both $h_D$ and $\sigma^2$ cannot be treated as constants in the context of $\expect{S, \widetilde{B}}{\cdot}$. $\sigma^2$ is affected by OfflineCorgiShuffle because the relating dataset is a subset (potentially with repetitions) of the original, and thus may have a different variance. $h_D$ is affected because the new blocks are not guaranteed to have the same blockwise variance (in fact, changing $h_D$ is the primary gain of OfflineCorgiShuffle).

Instead of attempting to to compute the expectation of this component, we will inspect an earlier part of this proof, at the point where the component was first derived - specifically, the calculation of $I_4$, at the inequality in equation $(10)$:

\begin{align*}
     I_4 = 2\eta_t^2\expect{}{||\sum_{k=1}^{bn}{\nabla f_{\Psi_t(k)}} - \expect{}{\sum_{k=1}^{bn}{\nabla f_{\Psi_t(k)}}}||^2} &= \frac{n(N-n)}{N-1}\expect{j}{||\nabla f_{B_j}(X_0^t) -b\nabla F(X_0^t)||^2} \\
     & \leq s\eta_s^2\frac{nb(N-n)}{N-1}h_D\sigma^2
\end{align*}

where the inequality is due to assumption 5 in Section \ref{lst:assumptions}. Due to Theorem \ref{thm:variance_bound} we have:
\begin{align*}
    \expect{S,\widetilde{B}}{\frac{n(N-n)}{N-1}\expect{j}{||\nabla f_{B_j}(X_0^t) -b\nabla F(X_0^t)||^2}} &= \frac{n(N-n)}{N-1} \expect{S,\widetilde{B}}{\expect{j}{||\nabla f_{B_j}(X_0^t) -b\nabla F(X_0^t)||^2}} \\
    &\leq  \frac{n(N-n)}{N-1}h'_D\frac{\sigma^2}{b}
\end{align*}

Thus we may substitute $\expect{S,\widetilde{B}}{C_2\eta_s^2nb\frac{N-n}{N-1}h_D\sigma^2}$ with $C_2\eta_s^2nb\frac{N-n}{N-1}h'_D\sigma^2$. All in all we obtain:

\begin{align*}
    \eta_sbn\expect{S,\widetilde{B}}{\left(F(X_0^t) - F(X^*)\right)} 
    &\leq \left(1 - \frac{1}{2}\eta_sbn\mu\right)\expect{S,\widetilde{B}}{||X_0^t - X^*||^2} -\expect{S,\widetilde{B}}{\expect{}{||X_0^{t+1} - X^*||^2}} \\
    &\quad + C_2\eta_t^2nb\frac{N-n}{N-1}h'_D\sigma^2 
    + C3\eta_t^3bn\left[ \left(\frac{N-n}{N-1}\right)^2(b-1)^2+   \left(\frac{N-1}{N-1}\right)^2\right] \\
    &\quad + C_4\eta_t^4b^4n^4
\end{align*}

The CorgiPile proof proceeds by applying lemma 3 (from their article) where series $a$ is $\{\left(F(X_0^t) - F(X^*)\right)\}_t$ and series $b$ is $\{||X_0^t - X^*||^2\}$. In our case, $\expect{S,\widetilde{B}}{\left(F(X_0^t) - F(X^*)\right)}$ and $\expect{S, \widetilde{B}}{||X_0^t - X^*||^2}$ can be used as seamless replacements. 

From that point forward no additional modifications are required, and we arrive at the convergence rate described in  Theorem \ref{thm:convergence}.

\section{Experimental Details}
\label{sec:app_experiments}

We implemented $\corgii$ shuffler within the PyTorch framework. Our code will be available online. The indexes of the dataset are allocated to blocks, which are then shuffled in either CorgiPile, $\corgii$ or full shuffle before the training. 

For CIFAR-100, we trained ResNet-18 for 200 epochs with a batch size of 256, learning rate 0.1, momentum 0.9, weight decay $5e-4$, and a Cosine Annealing LR scheduler. We used standard data augmentations - random crops, horizontal flips, rotations, and normalization (with the standard mean and std for CIFAR-100).

For ImageNet, we trained ResNet-50 for 100 epochs with a  batch size 2048, learning rate 0.1, momentum 0.9 weight decay $1e-4$, and a Cosine Annealing LR scheduler. We used the PyTorch AutoAugment functionality followed by random horizontal flip and normalization (with the standard mean and std for ImageNet) for data augmentation.

We changed the parameters $n$ and $b$ to fit the target buffer ratio for each experiment, but maintained the values when comparing CorgiPile to $\corgii$ on the same buffer ratio.

\subsection{Results Discussion}
Overall the experimental results affirm that $\corgii$ significantly outperforms CorgiPile. However, it may appear surprising that it did considerably better on ImageNet - where it achieved parity with full shuffle - than on CIFAR-100, where performance was degraded compared to full shuffle.

We believe this occurred as a result of using artificially small buffer ratios on a dataset that wasn't large to begin with. Up to this point we've never considered the specific task a learning model is trying to accomplish, and have thus focused on the variance between blocks as a key metric. However, CIFAR100 is a classifier. It is well known (\cite{classBalance}) that a dataset with an imbalanced weighting among classes (i.e the data is not equally distributed among classes) imposes additional challenges on the training process. By limiting the buffer size to 0.2\% on CIFAR100, one ends up with 100 examples per buffer (out of a total of 50000 in the train set) - which would lead to a high variance of the weight balancing among classes, compounding on top of the usual increase in variance that CorgiPile and $\corgii$ impose. While we have not quantified this in either a theoretical or experimental manner, it is reasonable to expect that this would slow down the convergence rate, which is the phenomena we observe in our results. 

While undesirable, this scenario does not appear to be very realistic - in most non synthetic cases, the number of examples that can fit on the memory of a single worker would be considerably larger than the number of classes.

\newpage

\end{document}